%\def\year{2020}\relax
%File: formatting-instruction.tex
\documentclass[letterpaper]{article} % DO NOT CHANGE THIS
\usepackage{uai2020}  % DO NOT CHANGE THIS
\usepackage[margin=1in]{geometry}

\usepackage{times}
\usepackage{graphicx} % DO NOT CHANGE THIS

\usepackage{natbib}
\setcitestyle{authoryear,open={(},close={)}}
\renewcommand{\refname}{References}
\makeatletter
\renewcommand{\bibsection}{%
	\subsubsection*{\refname%
		\@mkboth{\MakeUppercase{\refname}}{\MakeUppercase{\refname}}%
	}
}
\makeatother

\usepackage{amsmath}
\usepackage{amsfonts}
\usepackage{subcaption}
\usepackage{amsthm}
\usepackage{appendix}
\newtheorem{lemma}{Lemma}[section]
\newtheorem{theorem}[lemma]{Theorem}
\newtheorem{corollary}[lemma]{Corollary}

%\allowdisplaybreaks

\DeclareMathOperator*{\argmin}{arg\,min}

\title{Robust Collective Classification against Structural Attacks}

\author{ {\bf Kai Zhou}\\
	Computer Science and Engineering\\
	Washington University in St. Louis\\
	Saint Louis, MO 63105\\
	zhoukai@wustl
	\And
	{\bf Yevgeniy Vorobeychik}\\
	Computer Science and Engineering\\
	Washington University in St. Louis\\
	Saint Louis, MO 63105 \\
	yvorobeychik@wustl.edu
}

\begin{document}

	\maketitle

	%------------- abstract -------
	\begin{abstract}
		Collective learning methods exploit relations among
		data points to enhance classification
		performance. However, such relations, represented as
		edges in the underlying graphical model, expose an extra
		attack surface to the adversaries. We study adversarial
		robustness of an important class of such graphical
		models, Associative Markov Networks (AMN), to structural
		attacks, where an attacker can modify the graph
		structure at test time. We formulate the task of
		learning a robust AMN classifier as a bi-level
		program, where the inner problem is a challenging
		non-linear integer program that computes optimal
		structural changes to the AMN.  To address this
		technical challenge, we first relax the attacker problem,
		and then use duality to obtain a convex quadratic
		upper bound for the robust AMN problem.
		We then prove a bound on the quality of the resulting
		approximately optimal solutions, and experimentally
		demonstrate the efficacy of our approach.
		Finally, we apply our approach in a transductive
		learning setting, and show that robust AMN is much more robust
		than state-of-the-art deep learning methods, while
		sacrificing little in accuracy on non-adversarial data.

\end{abstract}
\section{INTRODUCTION}
Data from various domains can be compactly represented as graphs, such as social networks, citation networks, and protein interaction networks. In such graphical models, nodes, associated with attributes, represent the entities and edges indicate their relations. A common task is to classify nodes into different categories, e.g., classifying an account in a social network as malicious or benign.
Collective learning methods \citep{sen2008collective,taskar2002discriminative} exploit such relational structure in the data for classification tasks. 
For instance, in hypertext classification, the linked Web-pages tend to possess the same label, as they often share similar contents.  Graphical models \citep{koller2009probabilistic}, such as Markov networks \citep{richardson2006markov,taskar2004max,taskar2004learning,kok2005learning}, take into account such linking information in classifying relational data, exhibiting considerable performance improvement on a wide range of classification tasks \citep{li1994markov,taskar2004link,munoz2009contextual}.

However, making use of relational information in classification also exposes a new vulnerability in adversarial settings. Consider the task of classifying nodes in a social network as malicious or benign. 
If connectivity is used for this task, malicious parties have an incentive to manipulate information obtained about the network structure to reduce classification accuracy.
For example, malicious nodes may ostensibly cut ties among one another, or add links to benign nodes, with the purpose of remaining undetected.
While such structural attacks on collective learning have recently emerged~\citep{dai2018adversarial,zugner2018adversarial,DBLP:journals/corr/abs-1902-08412}, they have focused primarily on defeating neural network embedding-based approaches and transductive learning scenarios.

Our goal is to learn robust \emph{Associative Markov Networks} (AMN) \citep{taskar2004learning} that jointly classify the nodes in a given graph, where edges indicate that the connected nodes are likely to have the same label.
We formalize the problem of learning robust AMN as a bi-level program in which the inner optimization problem involves optimal structural attacks (adding or deleting edges).
The key technical challenge is that even the inner problem is a non-linear integer program, rendering the full optimization problem intractable.
We address this challenge by first relaxing the inner adversarial optimization, which allows us to approximate the full bi-level program by a convex quadratic upper bound that we can now efficiently minimize.
Our subsequent analysis first exhibits an approximation bound for the adversarial problem, and then an approximation bound on the solutions of our approximate robust AMN approach.

We test our approach on real datasets in several different settings. 

First, we show that structural attacks can effectively degrade the accuracy of AMN, with relational information now becoming an Achilles heel instead of an advantage. 
In contrast, robust AMN degrades gracefully, preserving the advantages of the use of relational information in classification, even with a relatively large adversarial modification of graph structure.
In addition, we compared robust AMN to a Graph Convolutional Network (GCN) classifier in a transductive learning setting under a structural attack, and show that robust AMN is significantly more robust than GCN, and is nearly as accurate as GCN on non-adversarial data.
This observation is particularly noteworthy given the fact that robust AMN was not specifically designed to be robust to transductive learning attacks, which effectively pollute both the training and test data.

\noindent{\bf Related Work} Our work falls into the realm of learning robust classifiers against decision-time reliability attacks~\citep{vorobeychik2018adversarial}.  There is a rich collection of prior work on decision-time attacks targeting a variety of classification approaches, ranging from classical models \citep{globerson2006nightmare,torkamani2013convex} to deep-learning methods \citep{eykholt2018robust,grosse2016adversarial}. As their countermeasures, several efforts \citep{li2014feature,li2018evasion,goodfellow2014explaining,madry2018towards} have been devoted to enhancing the robustness of classifiers in adversarial settings.
A fundamental difference in our work is that  we are defending against structural attacks that exploit the relations among data points for the purpose of attacking, while most prior work on robust learning considers settings that treat data independently.

More closely related to our focus are several prior studies of the vulnerability and robustness of collective learning models. \citeauthor{torkamani2013convex}~(\citeyear{torkamani2013convex}) also considered the robustness of AMN to prediction-time attacks, but consider attacks that modify node features, leaving robustness to structural attacks as an open problem.
Recently there have been a number of demonstrations of vulnerabilities of graph neural networks to attacks in semi-supervised (transductive learning) settings~\citep{dai2018adversarial,zugner2018adversarial,DBLP:journals/corr/abs-1902-08412}.
Specifically, \cite{dai2018adversarial} and \cite{zugner2018adversarial} proposed targeted attacks aiming at misclassifying a subset of target nodes while \cite{DBLP:journals/corr/abs-1902-08412} focused on reliability attacks at training time.
While our focus is not on such transductive learning problems (in which attacks also poison the training data), we explore the robustness of our approach in such settings in the experiments below. Other than collective learning, there are a host of works studying structural attacks as well as defense approaches in a more general setting of network analysis tasks, such as link prediction~\citep{zhou2019attacking,zhou2019adversarial}, community detection~\citep{waniek2018hiding} and so on.

\section{BACKGROUND}
A Markov network is defined over an undirected graph $\mathcal{G} = (V,E)$, where a node $v_i \in V$, $i = 1,2,\cdots,N$, is associated with a \emph{node variable} $Y_i$, representing an object to be classified. We assume there are $K$ discrete labels, i.e., $Y_i \in \{1,2,\cdots,K\}$. At a high level, a Markov network defines a joint distribution of $Y = \{Y_1,Y_2,\cdots,Y_N\}$, which is a normalized product of \emph{potentials}: $P_{\phi} (\mathbf{y}) =\frac{1}{Z}\prod_{c\in \mathcal{C}} \phi_c (\mathbf{y}_c)$, where $\phi_c({\mathbf{y}_c})$ is a potential function associated with each \emph{clique} $c \in \mathcal{C}$ in $\mathcal{G}$, and $Z$ is a normalization factor. The potential function $\phi_c(\mathbf{y}_c)$ maps a \emph{label assignment} $\mathbf{y}_c$ of the nodes in the clique $c$ to a non-negative value.

Our focus is the \emph{pairwise Associative} Markov Networks (AMN) \citep{taskar2004learning}, where each clique is either a node or a pair of nodes. Thus, the joint distribution of $Y$ can be written as
\begin{eqnarray}
	P_{\phi} (\mathbf{y}|\mathcal{G}) = \frac{1}{Z} \prod_{i=1}^N \phi_i(y_i) \prod_{(i,j)\in E} \phi_{ij}(y_i,y_j),
\end{eqnarray}
where we make explicit the dependency of the probability on the network structure $\mathcal{G}$. In detail, let $y_i^k$ be a binary indicator where $y_i^k = 1$ means that node $i$ is assigned label $k$. The node and edge potentials then join such label assignments with the node and edge features, respectively. Specifically, let $\mathbf{x}_i \in \mathbb{R}^{d_n}$ and $\mathbf{x}_{ij} \in \mathbb{R}^{d_e}$ be the feature vectors of node $i$ and edge $(i,j)$. In the log-linear model, the potential functions are defined as $\log \phi_i (y_i^k) = \mathbf{w}_n^k \cdot \mathbf{x}_i$ and $\log \phi_{ij}(y_i^k, y_j^{k'}) = \mathbf{w_e}^{k,k'} \cdot \mathbf{x}_{ij}$, where $\mathbf{w}_n^k \in \mathbb{R}^{d_n}$ and $\mathbf{w}_e^{k,k'} \in \mathbb{R}^{d_e}$ are node and edge parameters. Note that such parameters are \emph{label-specific} in that they are different with respect to labels $k, k'$, and the same for the nodes and edges, respectively.

In \emph{associative} MN, the link $(i,j)$ indicates an associative relation between nodes $i$ and $j$, meaning that $i$ and $j$ tend to be classified as the same label. Reflected on the edge potentials, it is assumed that $\phi_{ij}(y_i^k,y_j^{k'}) = 1$ for any different labels $k, k' \in \{1,2,\cdots,K\}$ while $\phi_{ij}(y_i^k,y_j^k)$ equals some value greater than $1$. Consequently, the edge potentials associated with those edges connecting differently labeled nodes are $0$ in the log space. For simplicity, we write the edge parameters for label $k$ as $\mathbf{w}_e^k$.  Putting everything together, the AMN defines the log of conditional probability as 
\begin{align}
	\label{eqn-log-prob}
	&\log P_\mathbf{w}(\mathbf{y}|\mathbf{x},\mathcal{G})\\
	=& \sum_{i=1}^N \sum_{k=1}^K (\mathbf{w}_n^k \cdot \mathbf{x}_i) y_i^k  + \sum_{(i,j)\in E} \sum_{k=1}^K (\mathbf{w}_e^{k} \cdot \mathbf{x}_{ij}) y_i^ky_j^k \nonumber\\
	&- \log Z_\mathbf{w}(\mathbf{x}),\nonumber
\end{align}
where $\mathbf{w}$ and $\mathbf{x}$ represent all the node and edge parameters and features, and $\mathbf{y}$ represents a label assignment. Importantly, the last term $Z_\mathbf{w} (\mathbf{x})$ does not depend on the assignment $\mathbf{y}$.

Two essential tasks of AMN are \emph{inference} and \emph{learning}. In inference, one seeks the optimal assignment $\mathbf{y}$ that maximizes the log conditional probability $\log P_{\mathbf{w}}(\mathbf{y}|\mathbf{x},\mathcal{G})$ (excluding the term $Z_\mathbf{w}(\mathbf{x})$), given observed features $\mathbf{x}$ and learned parameters $\mathbf{w}$. \cite{taskar2004learning}  showed that in AMN, such an inference problem can be (approximately) solved in polynomial time on arbitrary graph structures (when $K =2$, the solution is optimal). To learn the weights $\mathbf{w}$, \cite{taskar2004max} proposed a maximum margin approach by maximizing the gap between the confidence in the true labeling $\hat{\mathbf{y}}$ and any alternative labeling $\mathbf{y}$, denoted by  $ \Delta P_\mathbf{w}(\hat{\mathbf{y}}, \mathbf{y}|\mathbf{x},\mathcal{G}) = \log P_\mathbf{w}(\hat{\mathbf{y}}|\mathbf{x},\mathcal{G}) - \log P_\mathbf{w}(\mathbf{y}|\mathbf{x},\mathcal{G})$. Specifically, they formulate the learning problem as follows:
\begin{align}
	\label{eqn-qp}
	\min\quad  &\frac{1}{2}||\mathbf{w}||^2 + C \xi,\\
	\text{s.t.}\quad &\xi \geq \max_{\mathbf{y} \in \mathcal{Y}} (\Delta (\hat{\mathbf{y}},\mathbf{y}) - \Delta P_\mathbf{w}(\hat{\mathbf{y}}, \mathbf{y}|\mathbf{x},\mathcal{G})).\nonumber
\end{align}
By relaxing the integrality constraints and using strong duality of linear programming, the inner maximization problem is replaced with its dual minimization problem. As a result, the weights $\mathbf{w}$ can be efficiently learned through solving the quadratic program with linear constraints.	

\section{LEARNING ROBUST AMN}
%In this section, we formulate the problem of learning robust AMN against structural attacks as a bi-level program. We propose an approximation algorithm to tackle the bi-level program through linear program relaxation, resulting in a quadratic program that can be solved efficiently.

\subsection{MODEL}

In max-margin AMN learning, the exponential set of constraints is replaced by a single most-violated constraint.
%, imposed by mislabeling the node variables.
In the adversarial setting, the attacker is also modifying the structure of $\mathcal{G}$, potentially strengthening the constraint in Eqn.~\eqref{eqn-qp}. Our robust formulation extends the max-margin learning formulation, taking into account the change caused by modifications of $\mathcal{G}$.

We begin by considering an attacker who can delete existing edges from $\mathcal{G}$, and subsequently show that our model can be extended to the case where an attacker can both add and delete edges. To formalize, we assign a binary decision variable $e_{ij}$  for each edge $(i,j) \in E$, where $e_{ij} = 0$ means that the attacker decides to delete that edge. Then $\mathbf{e} = (e_{ij})_{(i,j)\in E}$ is the decision vector of the attacker.
Let $\mathcal{E} = \{\mathbf{e}: e_{ij} \in \{0,1\}; \sum_{(i,j)\in E} e_{ij} \geq |E| - D^-\}$ be the space of all the possible decision vectors, where $D^-$ is a budget on the number of edges that the attacker can delete. Then robust learning can be formulated as
\begin{align}
	\label{eqn-robust}
	\min\quad  &\frac{1}{2}||\mathbf{w}||^2 + C \xi,\\
	\text{s.t.}\quad &\xi \geq  \max_{\mathbf{y} \in \mathcal{Y}, \mathbf{e}\in \mathcal{E}} (\Delta (\hat{\mathbf{y}},\mathbf{y}) - \Delta P_\mathbf{w}(\hat{\mathbf{y}}, \mathbf{y}|\mathbf{x},\mathcal{G}(\mathbf{e}))),\nonumber
\end{align}
where $\mathcal{G}(\mathbf{e})$ denotes the modified graph obtained by deleting edges as indicated in $\mathbf{e}$ from $\mathcal{G}$. 
Henceforth, we omit the edge features $\mathbf{x}_{ij}$ and write $\mathbf{w}_e^k \mathbf{x}_{ij}$ as $w_e^k$ to simplify notation.
From Eqn.~\eqref{eqn-log-prob}, we can rewrite Eqn.~\eqref{eqn-robust} as
\begin{subequations}
	\label{eqn-robust-detail}
	\begin{align}
		\min\quad  &\frac{1}{2}||\mathbf{w}||^2 + C \xi,\\
		\text{s.t.}\quad &\xi \geq \max_{\mathbf{y} \in \mathcal{Y}, \mathbf{e}\in \mathcal{E}} 
		\sum_{i=1}^N \sum_{k=1}^K (\mathbf{w}_n^k \mathbf{x}_i)(y_i^k - \hat{y}_i^k) \label{C:attack}\\
		&+\sum_{(i,j) \in E} \sum_{k=1}^K w_e^k(y_i^k y_j^k - \hat{y}_i^k \hat{y}_j^k)\cdot e_{ij} \nonumber\\
		& + N - \sum_{i=1}^N\sum_{k=1}^K \hat{y}_i^k y_i^k.\nonumber
	\end{align}
\end{subequations}
In this formulation, the attacker's choice of which edges to remove is captured by the inner maximization problem over $\mathbf{e}$ inside Constraint~\eqref{C:attack}.
We call this attack, where edges can only be deleted, \textit{Struct-D}.

A natural extension is to consider a strong attacker who can simultaneously delete and add edges. We term such an attack \textit{Struct-AD} (\emph{AD} for adding and deleting edges).
% to differentiate it with \textit{Struct-D}.
% where the attacker can only delete links.
%From a practical point of view, simultaneously deleting and adding links also maintain the structure of $\mathcal{G}$ in some sense, thus making the resulting modified graph $\mathcal{G}(\mathbf{e})$ less suspicious. 
A critical difference between \textit{Struct-D} and \textit{Struct-AD} is that the search space of which ``non-edges'' should be added is significantly larger than that of edges to be removed, since graphs tend to be sparse.
This dramatically increases the complexity of solving Eqn.~\eqref{eqn-robust-detail} (in particular, of enforcing the constraint associated with computing the optimal attack). 
To address this, we restrict the attacker to only add edges between two data points that have different labels, resulting in a reduced set of non-edges denoted by $\bar{E}$. 
The intuition behind this restriction is that adding edges between nodes with the same label provides useful information for classification, and is unlikely to be a part of an optimal attack; rather, the attacker would focus on adding edges between pairs of nodes with different labels to increase classification noise.

%Intuitively, by adding edges between differently labeled nodes, the attacker may potentially introduce more negative information while significantly limiting the search space. 
For the \textit{Struct-AD} attack, we use a binary decision variable $\bar{e}_{ij}$ for each non-edge $(i,j) \in \bar{E}$, where $\bar{e}_{ij} = 1$ means that the attacker chooses to add an edge between $i$ and $j$. As a by-product, each term $w_e^k \hat{y}_i^k \hat{y}_j^k \bar{e}_{ij}$ becomes $0$. Then in the formulation of \textit{Struct-AD}, we only need to add terms  $\sum_{(i,j)\in \bar{E}}\sum_{k=1}^K w_e^k y_i^k y_j^k \bar{e}_{ij}$ in the attacker's objective and an extra linear constraint $\sum_{(i,j) \in \bar{E}} \bar{e}_{ij} \leq D^+$, where $D^+$ is the constraint on the number of edges that the attacker can add.
%\begin{align}
%\label{eqn-att-del-add}
%&\max_{\mathbf{y},\mathbf{e}}\quad \sum_{i=1}^N \sum_{k=1}^K (\mathbf{w}_n^k \mathbf{x}_i - \hat{y}_i^k)y_i^k + \sum_{(i,j)\in E}\sum_{k=1}^K w_e^k( y_i^k y_j^k - \hat{y}_i^k \hat{y}_j^k )e_{ij}  + \sum_{(i,j)\in \bar{E}}\sum_{k=1}^K w_e^k y_i^k y_j^k \bar{e}_{ij}\nonumber \\
%&\text{s.t.}\quad  \forall i, \sum_{k=1}^K y_i^k = 1, y_i^k \in \{0,1\}; \forall (i,j) \in E, e_{ij} \in \{0,1\}; |E| - \sum_{(i,j) \in E} e_{ij}\leq D^-; \nonumber\\
%&\forall (i,j) \in \bar{E}, \bar{e}_{ij} \in \{0,1\}; \sum_{(i,j) \in \bar{E}} \bar{e}_{ij} \leq D^+,
%\end{align}
%where $D^-$ and $D^+$ are the budgets of deleting and adding links respectively. 
We can further extend the formulation to allow additional restrictions on the attacker, such as limiting the change to node degrees; indeed, we can accommodate the addition of any linear constraints on the attack.
% by the approach we discuss below.

%Another possible extension is to model more practical attacks where the attacker cannot arbitrary change the network structure. For example, there might be a budget on degrees of the nodes that the attacker can change.  To this end, we can add a linear constraint for each of the nodes to the inner-layer optimization problem. In fact, our robust AMN formulation can naturally incorporate constraints that can be specified as linear functions. 

The weights of the robust AMN are learned through solving the bi-level optimization problem above.
%specified in Eqn~\eqref{eqn-robust-detail}. 
However, this is a challenging task: first, even the inner maximization problem (optimal structural attack) is a combinatorial optimization problem, and the bi-level nature of the underlying formulation makes it all the more intractable.
%Conceptually, the attacker solves the inner maximization problem, implicitly defining a function of the weights $\mathbf{w}$, which serves as a constraint of the outer-layer quadratic program. Solving the attacker's program itself is nontrivial as the objective involves non-linear terms of binary variables. We then resort to approximate algorithms to learn a robust-AMN with approximate weights.
Next, we present an efficient approximate solution to robust AMN learning with provable guarantees both for the attack subproblem, and to the overall robust learning problem.
We focus on formulation~\eqref{eqn-robust-detail} to simplify exposition, but all our results generalize to the setting with \textit{Struct-AD} attacks.

\subsection{APPROXIMATE SOLUTION}
Our solution is based on approximating the inner-layer non-linear integer program by a linear program (LP). To this end, we first linearize the non-linear terms (product of three binary variables) using standard techniques, and then relax the integrality constraints. 
Finally, we use LP duality to obtain a single convex quadratic program for robust AMN learning, thereby minimizing an upper bound on the original bi-level program in Eqn.~\eqref{eqn-robust-detail}.
Subsequently, we provide approximation guarantees for the resulting solutions.
%We then devise the duality of LP to represent the resulting LP with a set of linear constraints. Overall, learning robust-AMN  is formulated as a convex quadratic program.

To begin, we replace each non-linear term $y_i^k y_j^k e_{ij}$ in Eqn.~\eqref{eqn-robust-detail} with a non-negative continuous variable $z_{ij}^k$ and add three linear constraints $z_{ij}^k \leq y_i^k$, $z_{ij}^k \leq y_j^k$, and $z_{ij}^k \leq e_{ij}$. We omit the constraint $z_{ij}^k \geq y_i^k + y_j^k + e_{ij} -2$, since we are maximizing the objective (in the inner optimization problem) and the weights $w_e^k$ are non-negative; consequently, the optimal solution takes the value $z_{ij}^k = \min \{y_i^k,y_j^k,e_{ij}\}$, which is equivalent to $y_i^k y_j^k e_{ij}$ in the binary case. 
We further relax the integrality constraints on the binary variables $\mathbf{y}$ and $\mathbf{e}$, resulting in a linear program to approximate the attacker's problem, omitting the constant terms in $\mathbf{y}$ and $\mathbf{e}$:
\begin{align}
	\label{eqn-atk-lp}
	\max_{\mathbf{y},\mathbf{e}}\quad &\sum_{i=1}^N \sum_{k=1}^K (\mathbf{w}_n^k \mathbf{x}_i - \hat{y}_i^k)y_i^k 
	+ \sum_{(i,j)\in E}\sum_{k=1}^K w_e^k  z_{ij}^k\\
	&- \sum_{(i,j)\in E}\sum_{k=1}^K w_e^k  \hat{y}_i^k \hat{y}_j^k e_{ij} \nonumber \\
	\text{s.t.}\quad  &\forall i, \sum_{k=1}^K y_i^k = 1, y_i^k \geq 0; 
	\forall (i,j) \in E, e_{ij} \in [0,1]; \nonumber \\
	&|E| - \sum_{(i,j) \in E} e_{ij} \leq D^-; \nonumber \\
	&\forall (i,j) \in E, \forall k, z_{ij}^k \leq y_i^k, z_{ij}^k \leq y_j^k, z_{ij}^k \leq e_{ij}\nonumber
\end{align}

By LP duality, we can replace the attacker's maximization problem using its dual minimization problem, which is further integrated into Eqn.~\eqref{eqn-robust-detail}. Consequently, we can approximate Eqn.~\eqref{eqn-robust-detail} by a convex quadratic program (QP), which is presented in the appendix.
%with linear constraints:
%\begin{align}
%\label{eqn-robust-qp}
%&\min\  \frac{1}{2}||\mathbf{w}||^2 + C(N - \sum_{i=1}^N \sum_{k=1}^K \mathbf{w}_n^k \mathbf{x}_i \hat{y}_i^k  + \sum_{i=1}^N t_i + \sum_{(i,j)\in E} p_{ij} - D^- \cdot t_D) \nonumber \\
%&\text{s.t.}\quad \forall i,k,\quad  t_i -\sum_{(i,j),(j,i) \in E} t_{ij}^k - \mathbf{w}_n^k\mathbf{x}_i + \hat{y}_i^k \geq 0, \nonumber \\
%&\forall (i,j)\in E,k, \quad s_{ij}^k + t_{ij}^k + t_{ji}^k - w_e^k \geq 0,\  s_{ij}^k, t_{ij}^k,t_{ji}^k \geq 0, \nonumber\\
%&\forall (i,j) \in E,\quad  p_{ij} - \sum_{k=1}^K s_{ij}^k - t_D + \sum_{(i,j)\in E} w_e^k \hat{y}_i^k \hat{y}_j^k \geq 0,
%\ p_{ij}, t_D \geq 0.
%\end{align}
%The minimization is over the weights $\mathbf{w}$ and the dual variables $t_i, p_{ij}, s_{ij}^k, t_{ij}^k, t_{ji}^k, t_D$. 
We can use the same techniques to formulate a corresponding quadratic program for \textit{Struct-AD}.

When the LP relaxation defined in Eqn.~\eqref{eqn-atk-lp} produces integral solutions of $\mathbf{y}$ and $\mathbf{e}$, the QP will produce optimal weights. In the case where the LP's solutions are fractional, we obtain an upper bound on the true objective function.
% the relaxation incurs a more stringent constraint in Eqn.~\eqref{eqn-robust-detail}, driving up the slack variable $\xi$; equivalently, the defender is minimizing an upper bound of the real objective function. 
Thus, the approximation quality of LP determines the gap between the true and approximate objective values for the defender and, consequently, the gap between approximate and optimal solutions to robust AMN. In Section~\ref{sec-bound} we bound this gap.

\section{BOUND ANALYSIS}
\label{sec-bound}
The key to analyzing the bound of the defender's objective is to devise a well-approximated \textit{integral} solution to the attacker's problem as defined in Eqn.~\eqref{eqn-robust-detail}. On the one hand, such an integral solution bridges the gap between the optimal integral solution of the attacker's problem and the optimal fractional solution of its LP relaxation. On the other hand, it generates effective structural attacks to test our robust AMN model. We first focus on approximating structural attacks and then analyze how to transfer the bound on the attack performance to the bound on the defender's objective.

\subsection{APPROXIMATING STRUCTURAL ATTACKS}
Given fixed weights of the AMN model, the attacker solves the LP in~\eqref{eqn-atk-lp} and determine which edges to delete. Unfortunately, Eqn.~\eqref{eqn-atk-lp} will produce fractional solutions, meaning that the attacker needs to round the results to obtain a feasible (but not necessarily optimal) attack. 
\cite{kleinberg2002approximation} proposed a randomized rounding scheme to assign labels to nodes in a class of classification problems with pairwise relationships. We follow the idea and apply it to our case where we are simultaneously assigning $K$ labels to the nodes and two labels (delete or not) to the edges. 

Given the optimal solution $(\mathbf{y}^*, \mathbf{e}^*,\mathbf{z}^*)$ of LP \eqref{eqn-atk-lp},  the randomized rounding procedure (termed \textit{RRound}) produces a corresponding integral solution $(\mathbf{y}^I, \mathbf{e}^I)$. Specifically, \textit{RRound} rounds $\mathbf{y}^*$ and $\mathbf{e}^*$ in \emph{phases}. In a single phase, we independently draw a label $k \in \{1,2,\cdots,K\}$ and a label $b \in \{0,1\}$ (where $b = 0$ means the corresponding edge is chosen to be deleted) at the beginning. We assign this specific label $k$ to each node and $b$ to each edge in a probabilistic way. Specifically, we generate a continuous random number $\beta$ uniformly from $[0,1]$. For each node $i$, if $\beta \leq y_i^{k*}$, we assign the label $k$ to $i$ (i.e., set $y_i^k = 1$). For each edge $(i,j) \in E$, if $\beta \leq e_{ij}^{b*}$, where $e_{ij}^{1*} = e_{ij}^*$ and $e_{ij}^{0*} = 1 - e_{ij}^{*}$, we assign the label $b$ to $(i,j)$, i.e., $e_{ij} = b$. \textit{RRound} stops when all nodes and edges are assigned labels. For the auxiliary variable $z_{ij}^k$, it takes the minimum of the rounded $y_i^k$, $y_j^k$, and $e_{ij}$, which is equivalent to $y_i^k \cdot y_j^k \cdot e_{ij}$ in the binary space. We thus omit $\mathbf{z}^I$ and specify the output of \textit{RRound} as $(\mathbf{y}^I, \mathbf{e}^I)$.

To simplify presentation, we write the attacker's \textit{approximate} objective in Eqn.~\eqref{eqn-atk-lp} as $A(\mathbf{y},\mathbf{e},\mathbf{z};\mathbf{w}) = A_1(\mathbf{y},\mathbf{e};\mathbf{w}) + A_2(\mathbf{z};\mathbf{w})$, where $A_1(\mathbf{y},\mathbf{e};\mathbf{w}) = \sum_{i=1}^N \sum_{k=1}^K (\mathbf{w}_n^k \mathbf{x}_i - \hat{y}_i^k)y_i^k - \sum_{(i,j)\in E}\sum_{k=1}^K w_e^k  \hat{y}_i^k \hat{y}_j^k e_{ij}$ and $A_2(\mathbf{z};\mathbf{w}) = \sum_{(i,j)\in E}\sum_{k=1}^K w_e^k  z_{ij}^k$.  Similarity, the attacker's \textit{true} objective specified in \eqref{eqn-robust-detail} (inner problem) is denoted as $A^I(\mathbf{y},\mathbf{e}) = A_1^I(\mathbf{y},\mathbf{e};\mathbf{w}) + A_2^I(\mathbf{y},\mathbf{e};\mathbf{w})$, where $A_1^I(\mathbf{y},\mathbf{e};\mathbf{w}) = \sum_{i=1}^N \sum_{k=1}^K (\mathbf{w}_n^k \mathbf{x}_i - \hat{y}_i^k)y_i^k - \sum_{(i,j)\in E}\sum_{k=1}^K w_e^k  \hat{y}_i^k \hat{y}_j^k e_{ij}$ and $A_2(\mathbf{z};\mathbf{w}) = \sum_{(i,j)\in E}\sum_{k=1}^K w_e^k  y_i^k y_j^k e_{ij}^k $, with the constraint that $\mathbf{y}$ and $\mathbf{e}$ take binary values.

Our primary interest is then to  analyze the gap between $A^I(\mathbf{y}^I, \mathbf{e}^I;\mathbf{w})$ and $A(\mathbf{y}^*,\mathbf{e}^*,\mathbf{z}^*;\mathbf{w})$. The following lemma is a direct extension of Lemma B.1 of \citep{taskar2004learning}.
\begin{lemma}[\citep{taskar2004learning}]
	\label{lemma-1}
	RRound assigns label $k$ to node $i$ with probability $y_i^{k*}$ and assigns label $1$ to edge $(i,j)$ with probability $e_{ij}^{*}$.
\end{lemma}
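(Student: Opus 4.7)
The plan is to analyze the RRound procedure phase by phase and use independence of phases together with the LP feasibility constraints to compute the desired marginal probabilities.

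First I would focus on a single phase. Conditioned on the phase drawing label $k$ (which happens with probability $1/K$) and threshold $\beta$ uniform on $[0,1]$, node $i$ is assigned label $k$ in that phase iff $\beta \leq y_i^{k*}$, which has probability $y_i^{k*}$. So the unconditional probability that node $i$ is assigned label $k$ in a given phase is $y_i^{k*}/K$, and the probability that $i$ receives \emph{some} label in that phase is $\sum_{k'=1}^K y_i^{k'*}/K = 1/K$, where I use the LP constraint $\sum_{k'} y_i^{k'*} = 1$. Hence, conditioned on node $i$ being assigned in a particular phase, the label is $k$ with probability exactly $y_i^{k*}$.

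Next I would chain the phases. Because the draws of $(k,b,\beta)$ in different phases are independent, the first phase in which node $i$ gets assigned is geometrically distributed and almost surely finite, and conditioned on that phase the distribution over labels is $(y_i^{1*},\dots,y_i^{K*})$ by the previous step. Summing the geometric weights $\sum_{t\geq 1}(1-1/K)^{t-1}(1/K) = 1$ yields that the overall probability node $i$ receives label $k$ equals $y_i^{k*}$.

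The edge case is essentially identical: in a single phase, $b=1$ is drawn with probability $1/2$, and then $(i,j)$ is assigned label $1$ iff $\beta \leq e_{ij}^{1*} = e_{ij}^*$, while $b=0$ is drawn with probability $1/2$ and leads to assignment with probability $1-e_{ij}^*$; these two disjoint events give marginal assignment probability $1/2$, and conditioning recovers $\Pr[\text{label }1 \mid \text{assigned}] = e_{ij}^*$. Chaining over phases as before completes the argument. I do not expect a real obstacle here — the only thing to be careful about is invoking the LP constraint $\sum_k y_i^{k*} = 1$ at the right place to normalize the per-phase conditional probability, and noting the analogous identity $e_{ij}^{1*} + e_{ij}^{0*} = 1$ for edges; the rest is bookkeeping with a geometric series.
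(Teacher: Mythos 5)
Your proof is correct and is essentially the standard argument for this statement: the paper itself states the lemma without proof, deferring to Lemma B.1 of \citet{taskar2004learning}, and your per-phase conditioning (using $\sum_k y_i^{k*}=1$ and $e_{ij}^{0*}+e_{ij}^{1*}=1$ to get per-phase assignment probabilities $1/K$ and $1/2$) followed by the geometric-series summation over phases is exactly that argument --- indeed it is the same machinery the paper deploys explicitly in its proof of Lemma~\ref{lemma-2}. No gaps; the one point you flagged, invoking the LP normalization constraints at the conditioning step, is handled correctly.
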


Next, Lemma~\ref{lemma-2} gives a bound on the probability that edge $(i,j)$ is not deleted and node $i$ and $j$ are assigned the same label.
\begin{lemma}
	\label{lemma-2}
	RRound assigns label $k$ to node $i$ and node $j$ and assigns label $1$ to edge $(i,j)$ simultaneously with probability at least $\frac{1}{K + 4} z_{ij}^{k*}$.
\end{lemma}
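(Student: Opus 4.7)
The plan is to exploit the i.i.d.\ structure of the phases in \emph{RRound} and to reduce the desired joint probability to a ratio of two per-phase probabilities, namely the probability that all three items (node $i$, node $j$, edge $(i,j)$) are ``correctly'' assigned in a single phase, and the probability that \emph{at least one} of them is assigned in a single phase. This is the same first-success argument implicit in Lemma~\ref{lemma-1}, pushed to a joint event.

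First, since $w_e^k \ge 0$ in the attacker's LP~\eqref{eqn-atk-lp} and the inner problem is a maximization with only upper-bound constraints on $z_{ij}^k$, any LP optimum satisfies $z_{ij}^{k*} = \min\{y_i^{k*},\, y_j^{k*},\, e_{ij}^*\}$, so the event $\{\beta \le z_{ij}^{k*}\}$ coincides with $\beta$ lying below all three fractional values simultaneously. Let $q$ denote the unconditional per-phase probability that node $i$ and node $j$ both get label $k$ and edge $(i,j)$ gets label $1$: this requires drawing $k'=k$ (probability $1/K$), $b=1$ (probability $1/2$), and $\beta \le z_{ij}^{k*}$, hence $q = \tfrac{1}{2K}\,z_{ij}^{k*}$. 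Next, let $p$ be the per-phase probability that at least one of the three items is assigned; conditional on $(k',b)$ this is the event $\{\beta \le \max\{y_i^{k'*},y_j^{k'*},e_{ij}^{b*}\}\}$. Applying $\max(a,b,c)\le a+b+c$ and using $\sum_{k'}y_i^{k'*}=\sum_{k'}y_j^{k'*}=1$ together with $e_{ij}^{0*}+e_{ij}^{1*}=1$, I would obtain
\begin{equation*}
p \;\le\; \frac{1}{2K}\sum_{k'=1}^{K}\sum_{b\in\{0,1\}}\bigl(y_i^{k'*}+y_j^{k'*}+e_{ij}^{b*}\bigr) \;=\; \frac{K+4}{2K}.
\end{equation*}

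Finally, consider the first phase $T$ in which any of the three items is assigned; because phases are i.i.d.\ and none of the three is labeled before $T$, the conditional probability that in phase $T$ all three are assigned to the target labels equals $q/p$. On this sub-event the three items become permanently labeled exactly as the lemma requires, so the desired joint probability is at least $q/p \ge z_{ij}^{k*}/(K+4)$. The main obstacle I anticipate is the first step: cleanly justifying $z_{ij}^{k*}=\min\{y_i^{k*},y_j^{k*},e_{ij}^*\}$ at the LP optimum (so that $\beta$-coverage collapses to a single threshold) and phrasing the first-success reduction so that the coupling through the shared $\beta$, along with the fact that the ``at least one assigned'' event may include \emph{incorrect} assignments, is handled without double counting; once that is done, the bound on $p$ is a one-line union-type estimate.
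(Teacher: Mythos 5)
Your proposal is correct and follows essentially the same route as the paper's proof: the per-phase success probability $\tfrac{1}{2K}z_{ij}^{k*}$, the union-type bound $\max\{y_i^{k'*},y_j^{k'*},e_{ij}^{b*}\}\le y_i^{k'*}+y_j^{k'*}+e_{ij}^{b*}$ summed to $\tfrac{K+4}{2K}$ using the simplex constraints, and your first-success ratio $q/p$ is exactly the paper's geometric series $\sum_{t\ge 1} q(1-p)^{t-1}$, with the same deliberate undercounting of assignments occurring across separate phases. Your anticipated obstacle about justifying $z_{ij}^{k*}=\min\{y_i^{k*},y_j^{k*},e_{ij}^*\}$ is not actually needed: the LP constraints give $z_{ij}^{k*}\le\min\{y_i^{k*},y_j^{k*},e_{ij}^*\}$ regardless of $w_e^k$, and this inequality alone suffices since the per-phase success probability is $\tfrac{1}{2K}\min\{y_i^{k*},y_j^{k*},e_{ij}^*\}\ge\tfrac{1}{2K}z_{ij}^{k*}$.
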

\begin{proof}
	We consider the assignment in \emph{a single phase} when nodes $i$, $j$ and edge $(i,j)$ are not assigned any labels at the beginning of this phase. Note that in every phase, there are a total $2K$ combinations of the random draws $(k,b)$ with equal probabilities. Then the probability that $i$ and $j$ are assigned $k$ and $(i,j)$ is assigned $1$ is $\frac{1}{2K} \min \{y_i^{k*},y_j^{k*},e_{ij}^{1*}\} = \frac{1}{2K} z_{ij}^{k*}$, since the optimal solution of LP satisfies $z_{ij}^{k*} = \min \{y_i^{k*},y_j^{k*},e_{ij}^{1*}\}$. Consider a specific draw $(k,b)$, the probability that at least one of $i$ and $j$ is assigned label $k$ \emph{or} edge $(i,j)$ is assigned a label is $\frac{1}{2K}\max \{y_i^{k*},y_j^{k*},e_{ij}^{b*}\}$. By summing over all combination of $(k,b)$, we have the probability that none of $i$, $j$, and $(i,j)$ is assigned \emph{any} label is $1-\sum_{(k,b)} \frac{1}{2K}\max \{y_i^{k*},y_j^{k*},e_{ij}^{b*}\} = 1- \frac{1}{2K} \sum_{k=1}^K ( \max \{y_i^{k*},y_j^{k*},e_{ij}^{0*}\} + \max \{y_i^{k*},y_j^{k*},e_{ij}^{1*}\})$.
	
	Now consider all the phases. Nodes $i$ and $j$ and edge $(i,j)$ can be assigned label $k$ and $1$ simultaneously in a single phase or in separate phases. For the purpose of deriving the lower-bound, we only consider the probability that $i$ and $j$ are assigned $k$ and $(i,j)$ is assigned $1$ in a single phase. Summing over all phases, the probability that $i$ and $j$ are assigned $k$ and $(i,j)$ is assigned $1$ is at least
	\begin{align*}
		&\sum_{p=1}^{\infty}\frac{1}{2K} z_{ij}^{k*}[1-\frac{1}{2K} \sum_{k=1}^K ( \max \{y_i^{k*},y_j^{k*},e_{ij}^{0*}\} \\
		&+ \max \{y_i^{k*},y_j^{k*},e_{ij}^{1*}\})]^{p-1}\\
		=&  \frac{z_{ij}^{k*}}{\sum_{k=1}^K ( \max \{y_i^{k*},y_j^{k*},e_{ij}^{0*}\} + \max \{y_i^{k*},y_j^{k*},e_{ij}^{1*}\}}\\
		\geq &\frac{z_{ij}^{k*}}{\sum_{k=1}^K (y_i^{k*}+y_j^{k*}+e_{ij}^{0*} + y_i^{k*}+y_j^{k*}+e_{ij}^{1*})} \\
		=& \frac{z_{ij}^{k*}}{K + 4}.
	\end{align*}
	The first equality comes from $\sum_{i = 0}^{\infty} d^i = \frac{1}{1-d}$ for $d<1$.
\end{proof}

Finally, we can use both of these lemmas
to prove a lower bound on the expected value of $A^I(\mathbf{y}^I, \mathbf{e}^I;\mathbf{w})$:
\begin{theorem}
	\label{thm-atk-bound}
	Let $\mathbb{E}[A^I(\mathbf{y}^I, \mathbf{e}^I;\mathbf{w})]$ be the expected value of $A^I(\mathbf{y}^I, \mathbf{e}^I;\mathbf{w})$. Then $\mathbb{E}[A^I(\mathbf{y}^I, \mathbf{e}^I;\mathbf{w})] \geq A_1(\mathbf{y}^*,\mathbf{e}^*;\mathbf{w}) + \frac{1}{K+4} A_2(\mathbf{z}^*;\mathbf{w})$
\end{theorem}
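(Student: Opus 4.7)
The plan is to decompose $\mathbb{E}[A^I(\mathbf{y}^I,\mathbf{e}^I;\mathbf{w})]$ using linearity of expectation into the contributions from $A_1^I$ and $A_2^I$, and then handle each piece with one of the two preceding lemmas. Observe that $A_1^I$ is linear in the individual variables $y_i^k$ and $e_{ij}$ (each appears to degree one, since $\hat{y}_i^k$ are constants), whereas $A_2^I$ is the genuinely trilinear piece $\sum_{(i,j),k} w_e^k\, y_i^k y_j^k e_{ij}$, which is exactly the term that motivated introducing the auxiliary variables $z_{ij}^k$ in the LP relaxation.

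First, I would apply Lemma~\ref{lemma-1} to evaluate $\mathbb{E}[A_1^I(\mathbf{y}^I,\mathbf{e}^I;\mathbf{w})]$. Since $\mathbb{E}[y_i^{Ik}] = y_i^{k*}$ and $\mathbb{E}[e_{ij}^I] = e_{ij}^*$, linearity of expectation gives
\begin{equation*}
\mathbb{E}[A_1^I(\mathbf{y}^I,\mathbf{e}^I;\mathbf{w})] \;=\; \sum_{i,k}(\mathbf{w}_n^k\mathbf{x}_i - \hat{y}_i^k)\,y_i^{k*} \;-\; \sum_{(i,j)\in E, k} w_e^k\,\hat{y}_i^k\hat{y}_j^k\,e_{ij}^* \;=\; A_1(\mathbf{y}^*,\mathbf{e}^*;\mathbf{w}),
\end{equation*}
which matches the $A_1$ term in the claimed bound exactly, with no loss.

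Second, I would apply Lemma~\ref{lemma-2} to handle $A_2^I$. Writing $\mathbb{E}[A_2^I(\mathbf{y}^I,\mathbf{e}^I;\mathbf{w})] = \sum_{(i,j)\in E,k} w_e^k\,\Pr[y_i^{Ik}=1,\, y_j^{Ik}=1,\, e_{ij}^I=1]$, Lemma~\ref{lemma-2} lower bounds each probability by $z_{ij}^{k*}/(K+4)$. Because $w_e^k \geq 0$ (a standing assumption for associative MNs noted just before equation~\eqref{eqn-atk-lp}), the inequality direction is preserved when multiplying term-by-term, yielding
\begin{equation*}
\mathbb{E}[A_2^I(\mathbf{y}^I,\mathbf{e}^I;\mathbf{w})] \;\geq\; \frac{1}{K+4}\sum_{(i,j)\in E, k} w_e^k\, z_{ij}^{k*} \;=\; \frac{1}{K+4}\, A_2(\mathbf{z}^*;\mathbf{w}).
\end{equation*}
Summing the two estimates finishes the proof.

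There is no real technical obstacle remaining once the two lemmas are in hand — the argument is essentially bookkeeping plus nonnegativity of $w_e^k$. The only point that requires a little care is emphasizing why $A_1^I$ gives an equality rather than an inequality (its linearity in each decision variable means marginal probabilities suffice), while $A_2^I$ needs Lemma~\ref{lemma-2} because it involves a joint event over three correlated rounded variables; the $1/(K+4)$ factor is the price paid for this correlation and cannot be avoided in general.
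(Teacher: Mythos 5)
Your proposal is correct and follows exactly the paper's own argument: decompose via linearity of expectation, apply Lemma~\ref{lemma-1} to get equality for the linear part $A_1^I$, and apply Lemma~\ref{lemma-2} term-by-term (using $w_e^k \geq 0$) to lower-bound $A_2^I$ by $\frac{1}{K+4}A_2(\mathbf{z}^*;\mathbf{w})$. The paper states this in two lines; you have merely spelled out the same bookkeeping in more detail, which is fine.
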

\begin{proof}
	From Lemma~\ref{lemma-1} and Lemma~\ref{lemma-2}, we have $\mathbb{E}[A_1^I(\mathbf{y}^I, \mathbf{e}^I; \mathbf{w})] = A_1(\mathbf{y}^*,\mathbf{e}^*;\mathbf{w})$ and $\mathbb{E}[A_2^I(\mathbf{y}^I, \mathbf{e}^I; \mathbf{w})] \geq \frac{1}{K+4} A_2(\mathbf{z}^*;\mathbf{w})$, giving the claimed result.
\end{proof}

\paragraph{Derandomization} We propose a \emph{semi-derandomized} algorithm (termed \textit{Semi-RRound}) to select the edges. The key observation is that once $\mathbf{y}$ are determined, the objective is a linear function with respect to $\mathbf{e}$. Thus, by selecting those edges in ascending order of their coefficients, the objective is maximized.  Let $\mathbf{e}^{I'}$ be the decision vector for the edges output by \textit{Semi-RRound}.  Specifically, \text{Semi-RRound} takes an output $(\mathbf{y}^I,\mathbf{e}^I)$ of \textit{RRound} and compute the objective as a linear function of $\mathbf{e}$. It then deletes edges $(i,j)$ (sets $e_{ij}=0$) in ascending order of the coefficients of $e_{ij}$ until $D^-$ edges are deleted. Let $\mathbf{e}^{I'}$ be the decision vector for the edges output by \textit{Semi-RRound} and $A^I(\mathbf{y}^I,\mathbf{e}^{I'};\mathbf{w})$ be the corresponding objective. The following Corollary shows that the expected value of $A^I(\mathbf{y}^I,\mathbf{e}^{I'};\mathbf{w})$ retains the lower bound in Theorem~\ref{thm-atk-bound}.

\begin{corollary}
	\label{coro-round}
	$\mathbb{E}[A^I(\mathbf{y}^I, \mathbf{e}^{I'};\mathbf{w})] \geq A_1(\mathbf{y}^*,\mathbf{e}^*;\mathbf{w}) + \frac{1}{K+4} A_2(\mathbf{z}^*;\mathbf{w})$.
\end{corollary}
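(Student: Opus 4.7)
The plan is to exploit the fact that, once the node assignment $\mathbf{y}^I$ is fixed, the attacker's true objective $A^I(\mathbf{y}^I, \mathbf{e}; \mathbf{w})$ is \emph{linear} in $\mathbf{e}$, and \textit{Semi-RRound} is precisely the greedy algorithm that maximizes this linear objective over the feasible edge set. A pointwise inequality $A^I(\mathbf{y}^I, \mathbf{e}^{I'}; \mathbf{w}) \geq A^I(\mathbf{y}^I, \mathbf{e}^I; \mathbf{w})$ then combines with Theorem~\ref{thm-atk-bound} via expectation to give the claim.

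Concretely, the first step is to group the $e_{ij}$-terms in $A^I = A_1^I + A_2^I$ under a fixed $\mathbf{y}^I$ to obtain
\[
A^I(\mathbf{y}^I, \mathbf{e}; \mathbf{w}) \;=\; \mathrm{const}(\mathbf{y}^I) \;+\; \sum_{(i,j)\in E} c_{ij}(\mathbf{y}^I)\, e_{ij},
\]
with coefficients $c_{ij}(\mathbf{y}^I) = \sum_{k=1}^K w_e^k \bigl(y_i^{I,k} y_j^{I,k} - \hat{y}_i^k \hat{y}_j^k\bigr)$. The feasibility set for $\mathbf{e}$ is the binary hypercube intersected with the single cardinality constraint $\sum_{(i,j)\in E} e_{ij} \geq |E| - D^-$. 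Maximizing a linear objective under such a uniform-matroid (``delete at most $D^-$ edges'') constraint is solved optimally by the greedy rule of deleting edges in ascending order of their coefficients, which is exactly what \textit{Semi-RRound} does. Hence $\mathbf{e}^{I'}$ is an optimal feasible edge vector given $\mathbf{y}^I$, so $A^I(\mathbf{y}^I, \mathbf{e}^{I'}; \mathbf{w}) \geq A^I(\mathbf{y}^I, \mathbf{e}^I; \mathbf{w})$ whenever $\mathbf{e}^I$ is itself feasible.

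The second step is to take expectations over the randomness in \textit{RRound}: the pointwise inequality yields $\mathbb{E}[A^I(\mathbf{y}^I, \mathbf{e}^{I'}; \mathbf{w})] \geq \mathbb{E}[A^I(\mathbf{y}^I, \mathbf{e}^I; \mathbf{w})]$, and Theorem~\ref{thm-atk-bound} lower-bounds the right-hand side by $A_1(\mathbf{y}^*,\mathbf{e}^*;\mathbf{w}) + \tfrac{1}{K+4} A_2(\mathbf{z}^*;\mathbf{w})$, which is the stated bound.

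I expect the main technical hurdle to be verifying feasibility of the raw $\mathbf{e}^I$ produced by \textit{RRound}. By Lemma~\ref{lemma-1} and LP feasibility of $\mathbf{e}^*$, one only obtains $\mathbb{E}[\sum_{(i,j)} e_{ij}^I] = \sum_{(i,j)} e_{ij}^* \geq |E| - D^-$ in expectation, so individual rounded samples may overshoot the deletion budget. I would handle this either by conditioning \textit{RRound} on producing a feasible output (rejection sampling) and checking that the Theorem~\ref{thm-atk-bound} inequality is preserved under such conditioning, or by coupling each infeasible sample to a nearby feasible one through an ``un-delete'' correction whose effect on the objective vanishes in expectation. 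Either way, the structural optimality of $\mathbf{e}^{I'}$ over the feasible edge set is the engine of the proof.
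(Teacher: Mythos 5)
Your argument is essentially the paper's own proof: it observes that \textit{Semi-RRound} returns the maximizer of the linear-in-$\mathbf{e}$ objective once $\mathbf{y}^I$ is fixed, deduces the pointwise inequality $A^I(\mathbf{y}^I,\mathbf{e}^{I'};\mathbf{w}) \geq A^I(\mathbf{y}^I,\mathbf{e}^{I};\mathbf{w})$ for every rounded output, and concludes by taking expectations and invoking Theorem~\ref{thm-atk-bound}. The feasibility hurdle you flag---that a raw \textit{RRound} sample $\mathbf{e}^I$ may delete more than $D^-$ edges, in which case dominance by the feasible greedy optimum $\mathbf{e}^{I'}$ is not automatic---is genuine, but the paper's proof does not address it and simply asserts the pointwise inequality for every rounded output, so your sketched rejection/coupling repairs go beyond, rather than diverge from, the published argument.
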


\begin{proof}
	For each $(\mathbf{y}^I, \mathbf{e}^I)$, \textit{Semi-RRound} finds the optimal $\mathbf{e}^{I'}$ that maximizes $A^I(\mathbf{y}^I,\mathbf{e};\mathbf{w})$. Thus, $A^I(\mathbf{y}^I, \mathbf{e}^{I'};\mathbf{w}) \geq A^I(\mathbf{y}^I, \mathbf{e}^{I};\mathbf{w})$ for every rounded output $(\mathbf{y}^I, \mathbf{e}^I)$. By the definition of expectation, we have $\mathbb{E}[A^I(\mathbf{y}^I, \mathbf{e}^{I'};\mathbf{w})] \geq A_1(\mathbf{y}^*,\mathbf{e}^*;\mathbf{w}) + \frac{1}{K+4} A_2(\mathbf{z}^*;\mathbf{w})$.
\end{proof}

\subsection{BOUNDING THE DEFENDER'S OBJECTIVE}
We rewrite the defender's objective in Eqn.~\eqref{eqn-robust-detail} as 
\begin{equation}
	\label{eqn-def-obj}
	\min \ F_D(\mathbf{w}) = D(\mathbf{w}) + C \cdot \max_{\mathbf{y} \in \mathcal{Y}, \mathbf{e}\in \mathcal{E}} A^I(\mathbf{y},\mathbf{e};\mathbf{w}),
\end{equation}
where $D(\mathbf{w}) = \frac{1}{2}||\mathbf{w}||^2 + C(N- \sum_{i=1}^N \sum_{k=1}^K \mathbf{w}_n^k \mathbf{x}_i \hat{y}_i^k)$. 
Note that the inner maximization problem implicitly defines a function of $\mathbf{w}$, which we denote by $F_A(\mathbf{w}) = A^I(\mathbf{y}^{I*},\mathbf{e}^{I*};\mathbf{w})$. In solving Eqn.~\eqref{eqn-def-obj}, we are approximating $A^I(\mathbf{y},\mathbf{e};\mathbf{w})$ using its LP relaxation; that is, the approximated inner-layer maximization defines a new function $\hat{F}_A(\mathbf{w}) = A(\mathbf{y}^*,\mathbf{e}^*,\mathbf{z}^*;\mathbf{w})$. Thus, instead of directly minimizing the \textit{actual} objective $F_D(\mathbf{w}) = D(\mathbf{w}) + C F_A(\mathbf{w})$, the defender is minimizing an \textit{approximated} (upper-bound) objective $\hat{F}_D(\mathbf{w}) = D(\mathbf{w}) + C \hat{F}_A(\mathbf{w})$.

Let $\mathbf{w}^* = \argmin F_D(\mathbf{w})$ and $\mathbf{w}^\diamond = \argmin \hat{F}_D(\mathbf{w})$. The following theorem bounds the difference between $\hat{F}_D(\mathbf{w}^\diamond)$ and $F_D(\mathbf{w}^*)$.

\begin{theorem}
	Let $\epsilon = \max \{  w_e^{*1},w_e^{*2},\cdots,w_e^{*K} \}$. Then, $\hat{F}_D(\mathbf{w}^\diamond) - F_D(\mathbf{w}^*) \leq \frac{(K+3)C|E|}{K+4}\epsilon$.
\end{theorem}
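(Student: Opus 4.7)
The plan is to proceed along the standard ``optimality of $\mathbf{w}^{\diamond}$ versus suboptimality gap'' template. First I would observe that, since $\mathbf{w}^{\diamond}$ minimizes the approximate objective,
\begin{equation*}
\hat{F}_D(\mathbf{w}^{\diamond}) \;\leq\; \hat{F}_D(\mathbf{w}^{*}),
\end{equation*}
and therefore
\begin{equation*}
\hat{F}_D(\mathbf{w}^{\diamond}) - F_D(\mathbf{w}^{*}) \;\leq\; \hat{F}_D(\mathbf{w}^{*}) - F_D(\mathbf{w}^{*}) \;=\; C\bigl(\hat{F}_A(\mathbf{w}^{*}) - F_A(\mathbf{w}^{*})\bigr),
\end{equation*}
because the $D(\mathbf{w})$ portion is the same on both sides. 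The problem thus reduces to controlling the ``approximation slack'' of the LP relaxation at the single point $\mathbf{w}^{*}$.

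Next I would combine the two sides of the sandwich around $F_A(\mathbf{w}^{*})$ provided by the LP relaxation and by the randomized rounding analysis. On the one hand, $\hat{F}_A(\mathbf{w}^{*}) = A_1(\mathbf{y}^{*},\mathbf{e}^{*};\mathbf{w}^{*}) + A_2(\mathbf{z}^{*};\mathbf{w}^{*})$ by the decomposition introduced before Lemma~\ref{lemma-2}. On the other hand, Theorem~\ref{thm-atk-bound} (or, in its derandomized form, Corollary~\ref{coro-round}) gives a \emph{feasible} integral attack whose expected value lower-bounds $F_A(\mathbf{w}^{*})$:
\begin{equation*}
F_A(\mathbf{w}^{*}) \;\geq\; \mathbb{E}[A^I(\mathbf{y}^I,\mathbf{e}^{I'};\mathbf{w}^{*})] \;\geq\; A_1(\mathbf{y}^{*},\mathbf{e}^{*};\mathbf{w}^{*}) + \tfrac{1}{K+4} A_2(\mathbf{z}^{*};\mathbf{w}^{*}).
\end{equation*}
Subtracting cancels the $A_1$ piece exactly and yields
\begin{equation*}
\hat{F}_A(\mathbf{w}^{*}) - F_A(\mathbf{w}^{*}) \;\leq\; \Bigl(1 - \tfrac{1}{K+4}\Bigr) A_2(\mathbf{z}^{*};\mathbf{w}^{*}) \;=\; \tfrac{K+3}{K+4}\, A_2(\mathbf{z}^{*};\mathbf{w}^{*}).
\end{equation*}

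Finally I would bound $A_2(\mathbf{z}^{*};\mathbf{w}^{*}) = \sum_{(i,j)\in E}\sum_{k=1}^K w_e^{*k} z_{ij}^{k*}$. Using the LP constraints $z_{ij}^{k*} \leq y_i^{k*}$ together with $\sum_k y_i^{k*} = 1$, we get $\sum_k z_{ij}^{k*} \leq 1$ for each edge, and pulling out $\epsilon = \max_k w_e^{*k}$ gives $A_2(\mathbf{z}^{*};\mathbf{w}^{*}) \leq |E|\,\epsilon$. Substituting back produces the claimed $\frac{(K+3)C|E|}{K+4}\epsilon$.

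The main obstacle (and the only place any subtlety hides) is step two: one has to be sure that the chain $F_A(\mathbf{w}^{*}) \geq \mathbb{E}[\cdots] \geq A_1 + \tfrac{1}{K+4}A_2$ really uses a \emph{feasible} rounded attack, so that the inequality $F_A \geq \mathbb{E}[\cdots]$ is valid. This is exactly why invoking Corollary~\ref{coro-round} (Semi-RRound), rather than raw RRound, matters here, since it guarantees the edge-budget constraint is respected while preserving the same lower bound. Once that is in place, everything else is algebraic manipulation plus the elementary bound on $A_2$.
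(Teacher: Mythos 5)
Your proposal is correct and follows essentially the same route as the paper's own proof: the same optimality comparison $\hat{F}_D(\mathbf{w}^{\diamond}) \leq \hat{F}_D(\mathbf{w}^{*})$, the same cancellation of $A_1$ via Corollary~\ref{coro-round} against the integral optimum, and the same bound $A_2(\mathbf{z}^{*};\mathbf{w}^{*}) \leq |E|\epsilon$ from $z_{ij}^{k*} \leq y_i^{k*}$ and $\sum_k y_i^{k*} = 1$. Your explicit remark that Semi-RRound (rather than raw RRound) is needed to guarantee a budget-feasible rounded attack, so that $F_A(\mathbf{w}^{*}) \geq \mathbb{E}[A^I(\mathbf{y}^I,\mathbf{e}^{I'};\mathbf{w}^{*})]$ holds, is a point the paper leaves implicit, but it is the same argument.
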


\begin{proof}
	Since $\mathbf{w}^\diamond$ minimizes $\hat{F}_D(\mathbf{w})$, we have $\hat{F}_D(\mathbf{w}^\diamond) \leq \hat{F}_D(\mathbf{w}^*)$. Then
	\begin{align}
		&\hat{F}_D(\mathbf{w}^\diamond) - F_D(\mathbf{w}^*)\\
		\leq &\  \hat{F}_D(\mathbf{w}^*) -  F_D(\mathbf{w}^*)\nonumber \\
		= &\  C\cdot \hat{F}_A(\mathbf{w}^*) - C\cdot F_A(\mathbf{w}^*)\nonumber \\
		= &\   C(A(\mathbf{y}^*,\mathbf{e}^*,\mathbf{z}^*;\mathbf{w}^*) - A^I(\mathbf{y}^{I*},\mathbf{e}^{I*};\mathbf{w}^*)) \nonumber \\
		\leq &\   C\frac{K+3}{K+4}A_2(\mathbf{z}^*;\mathbf{w}^*).
	\end{align}
	The second inequality is from the result of Corollary~\ref{coro-round} and the fact that  $(\mathbf{y}^{I*},\mathbf{e}^{I*})$ is the integral optimal solution. Then $A_2(\mathbf{z}^*;\mathbf{w}^*) = \sum_{(i,j)\in E}\sum_{k=1}^K w_e^{*k} z_{ij}^{*k}$. As $z_{ij}^{*k} \leq y_i^{*k}$ and $\sum_{k=1}^K y_i^{*k} = 1$ in the optimal fractional solution, we have $A_2(\mathbf{z}^*;\mathbf{w}^*) \leq \sum_{(i,j)\in E} \epsilon \sum_{k=1}^K y_i^{*k} = |E|\epsilon$. Thus $\hat{F}_D(\mathbf{w}^\diamond) - F_D(\mathbf{w}^*) \leq \frac{(K+3)C|E|}{K+4}\epsilon$.
\end{proof}

We note that the bound analysis can be extended to the case where the attacker can delete and add links simultaneously. One difference is that instead of using the inequality $z_{ij}^* \leq y_i^{*k}$, we can use $\bar{z}_{ij}^{*k} \leq \bar{e}_{ij}^{*}$, where $\bar{z}_{ij}^{k}$ is the auxiliary variable to represent the product $y_i^k y_j^k \bar{e}_{ij}$. This leads to a bound $\frac{(K+3)C(|E|+ K\cdot D^+)}{K+4}\epsilon$, where $D^+$ is the budget on the number of added edges.
\section{EXPERIMENTS}
We test the performance of the robust AMN classifier (henceforth R-AMN) under structural attacks as well as a recently proposed attack on Graph Convolutional Networks (GCN).
We focus on the binary classification task as AMN will in this case learn the optimal weights.
%\noindent{\bf Datasets:}

\subsection{DATASETS} We consider four real-world datasets:
Reuters, WebKB, Cora and CiteSeer. For the Reuters dataset
\citep{taskar2004learning,torkamani2013convex}, we follow the procedure
in \citep{taskar2004learning} to
extract four categories (``trade'', ``crude'', ``grain'', and
``money-fx'') of documents where each document belongs to a unique
category. We create a binary dataset by fixing ``trade'' as the
positive class and randomly drawing an equal number of documents from
the other three categories. We extract the bag-of-words representation
for each document. To construct the links, we connect each document to
its three closest neighbors in terms of the cosine distance of TF-IDF
representations, resulting in a graph of $862$ nodes and $1860$
edges. The WebKB dataset \citep{craven1998learning} contains webpages
(represented as binary vectors of dimension $1703$) from four
universities, which are classified into five classes. We fix
``student'' as the positive class and the rest as the negative class. We connect each webpage to its $3$ closest neighbors based on cosine distance, resulting in a network of $877$ nodes and $2282$ edges. Cora \citep{mccallum2000automating} and CiteSeer \citep{giles1998citeseer} are two citation networks, where the nodes in the graph are the bag-of-words representations of papers and edges are the citation relations among them. For Cora, we use the ``Prob-methods'' as the positive class and randomly draw an equal number of papers from the other categories to form the negative class, resulting in a subgraph of $852$ nodes and $838$ edges. Similarly for CiteSeer, we fix ``agents'' as the positive class, resulting in a subgraph of $1192$ nodes and $953$ edges. Cora and CiteSeer have more sparse graph structures than the Reuters and WebKB. Moreover, Reuters and WebKB tend to be  regular graphs while a few nodes in Cora and CiteSeer have large degrees. We split each dataset evenly into training and test data for our experiments.

%-- results under struct-D attack
\begin{figure}[t!]
	\centering
	\begin{subfigure}[b]{0.23\textwidth}
		\centering
		\includegraphics[width=\textwidth, height=2.5cm]{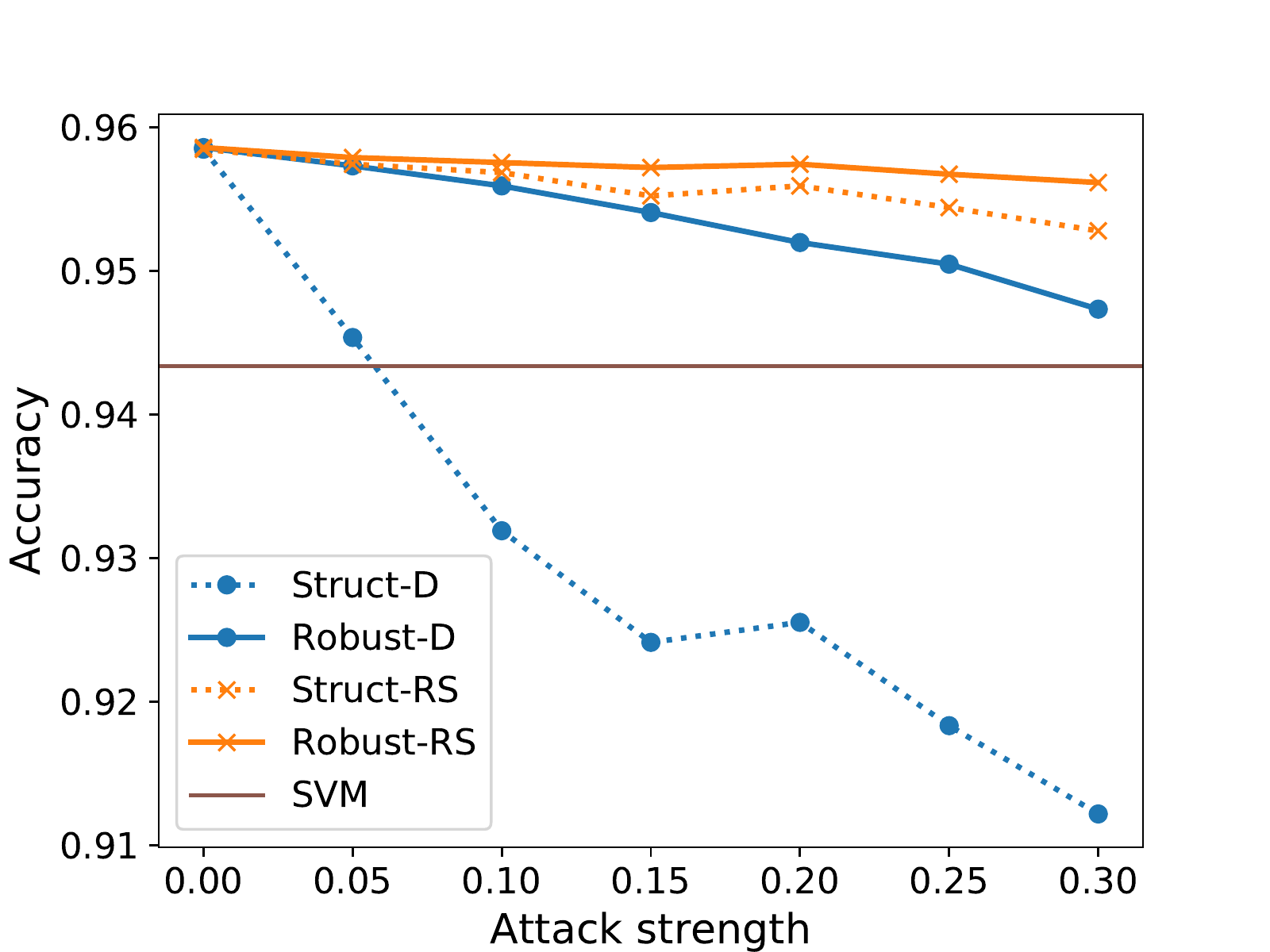}
		\caption{Reuters-H}
		\label{fig-Reuters-high-del}
	\end{subfigure}
	\hfill
	\begin{subfigure}[b]{0.23\textwidth}
		\centering
		\includegraphics[width=\textwidth, height=2.5cm]{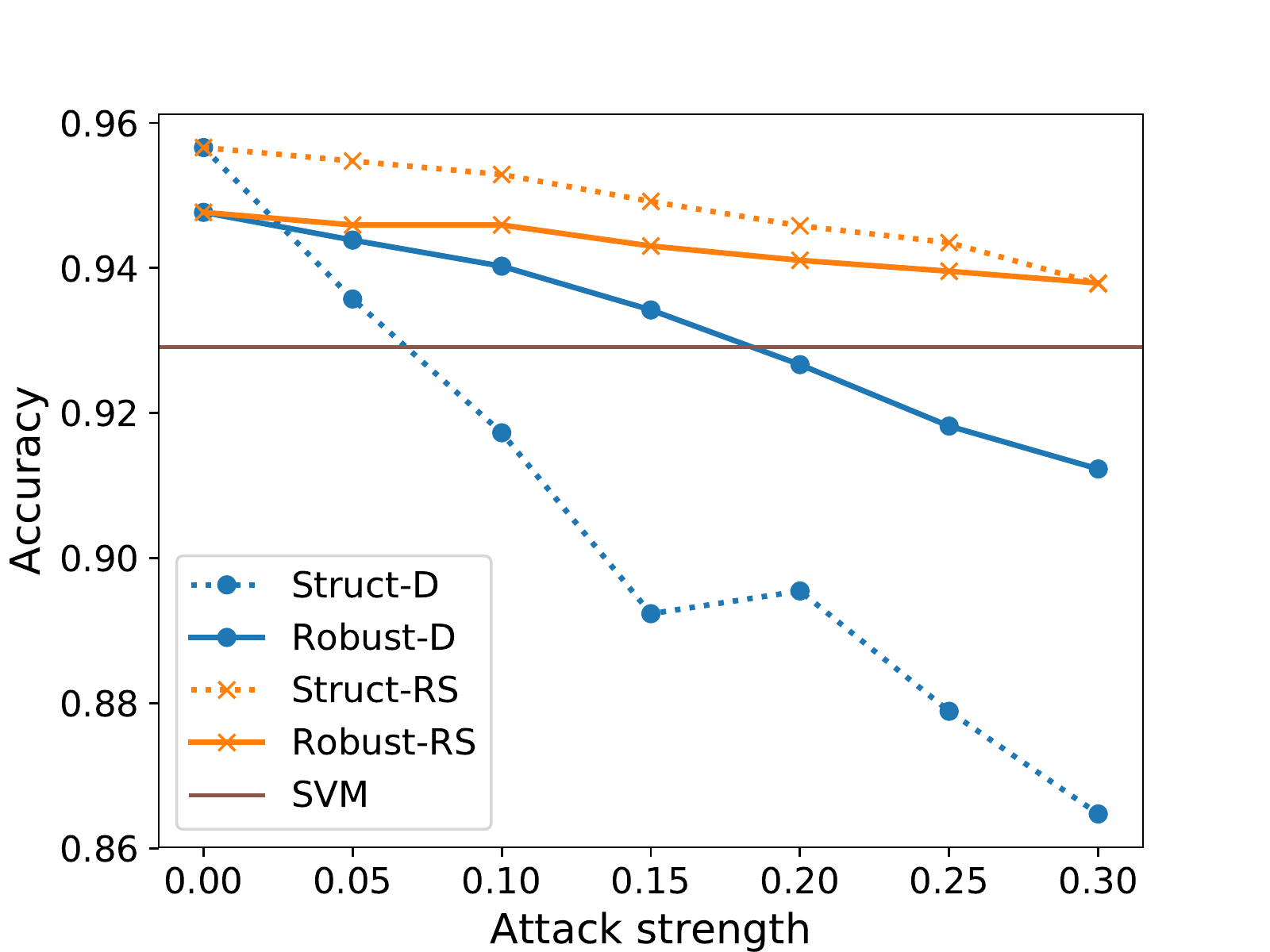}
		\caption{Reuters-L}
		\label{fig-Reuters-low-del}
	\end{subfigure}
	\hfill
	\begin{subfigure}[b]{0.23\textwidth}
		\centering
		\includegraphics[width=\textwidth, height=2.5cm]{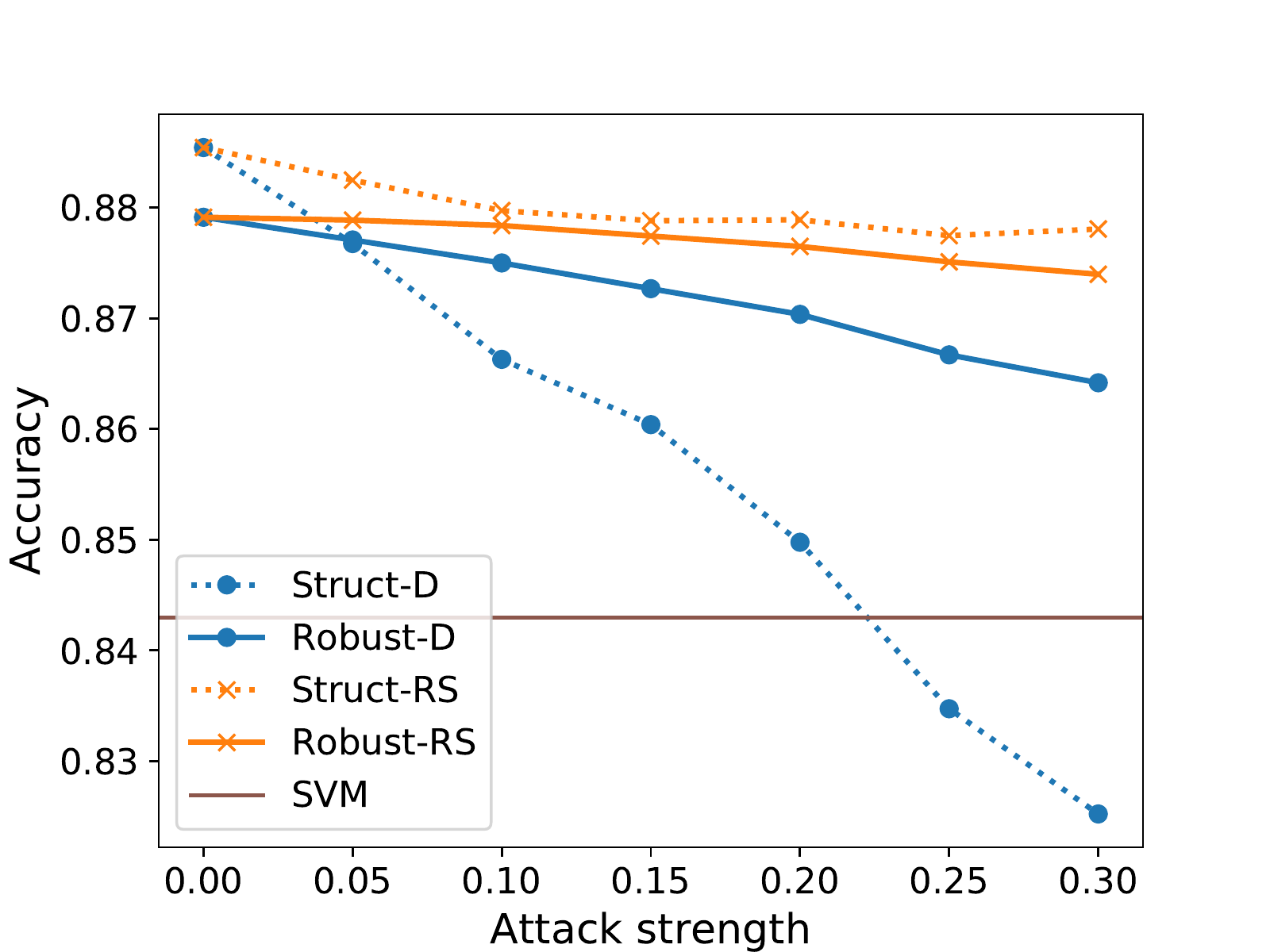}
		\caption{Cora-H}
		\label{fig-Cora-high-del}
	\end{subfigure}
	\hfill
	\begin{subfigure}[b]{0.23\textwidth}
		\centering
		\includegraphics[width=\textwidth, height=2.5cm]{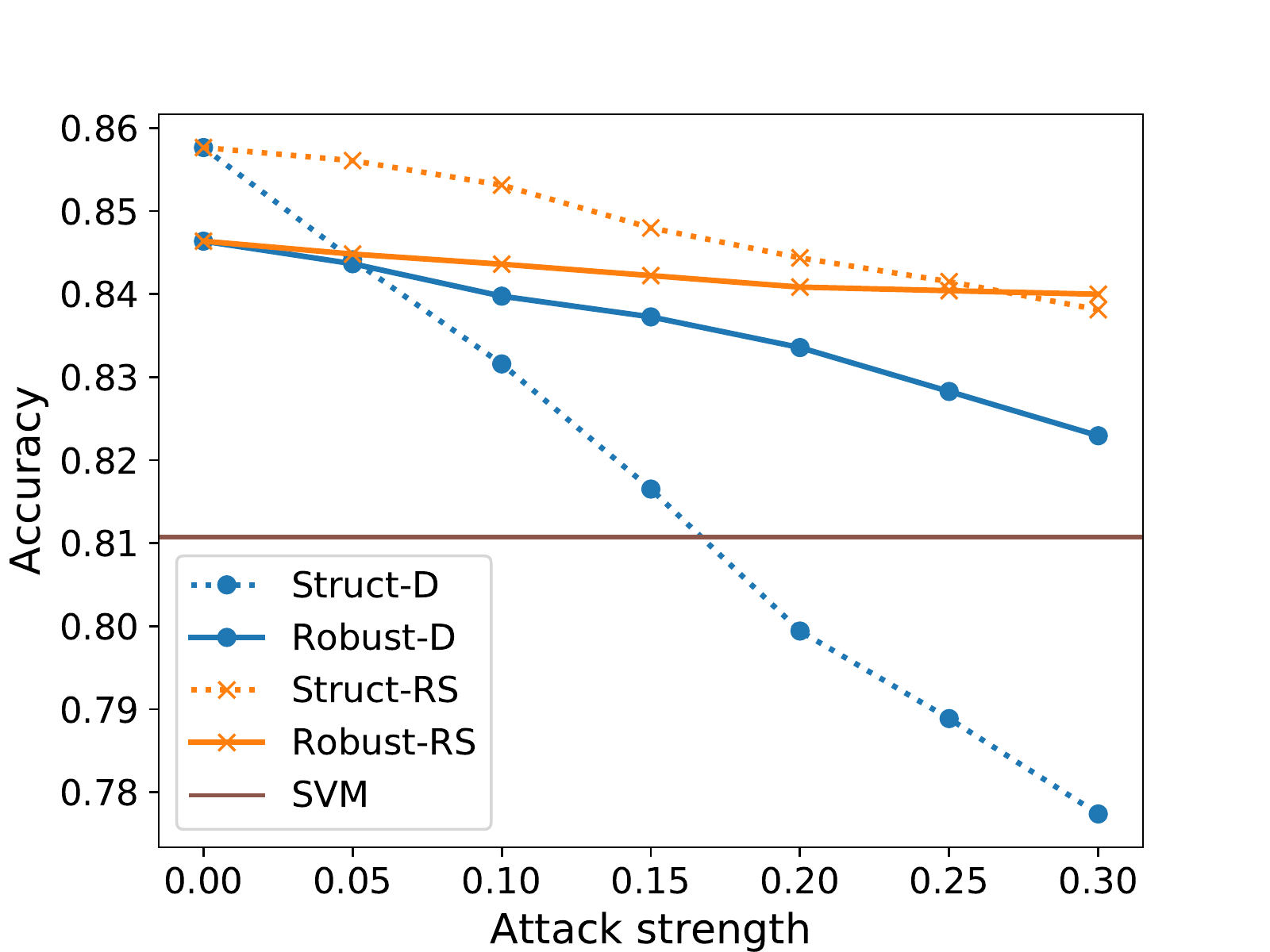}
		\caption{Cora-L}
		\label{fig-Cora-low-del}
	\end{subfigure}
	\hfill
	\begin{subfigure}[b]{0.23\textwidth}
		\centering
		\includegraphics[width=\textwidth, height=2.5cm]{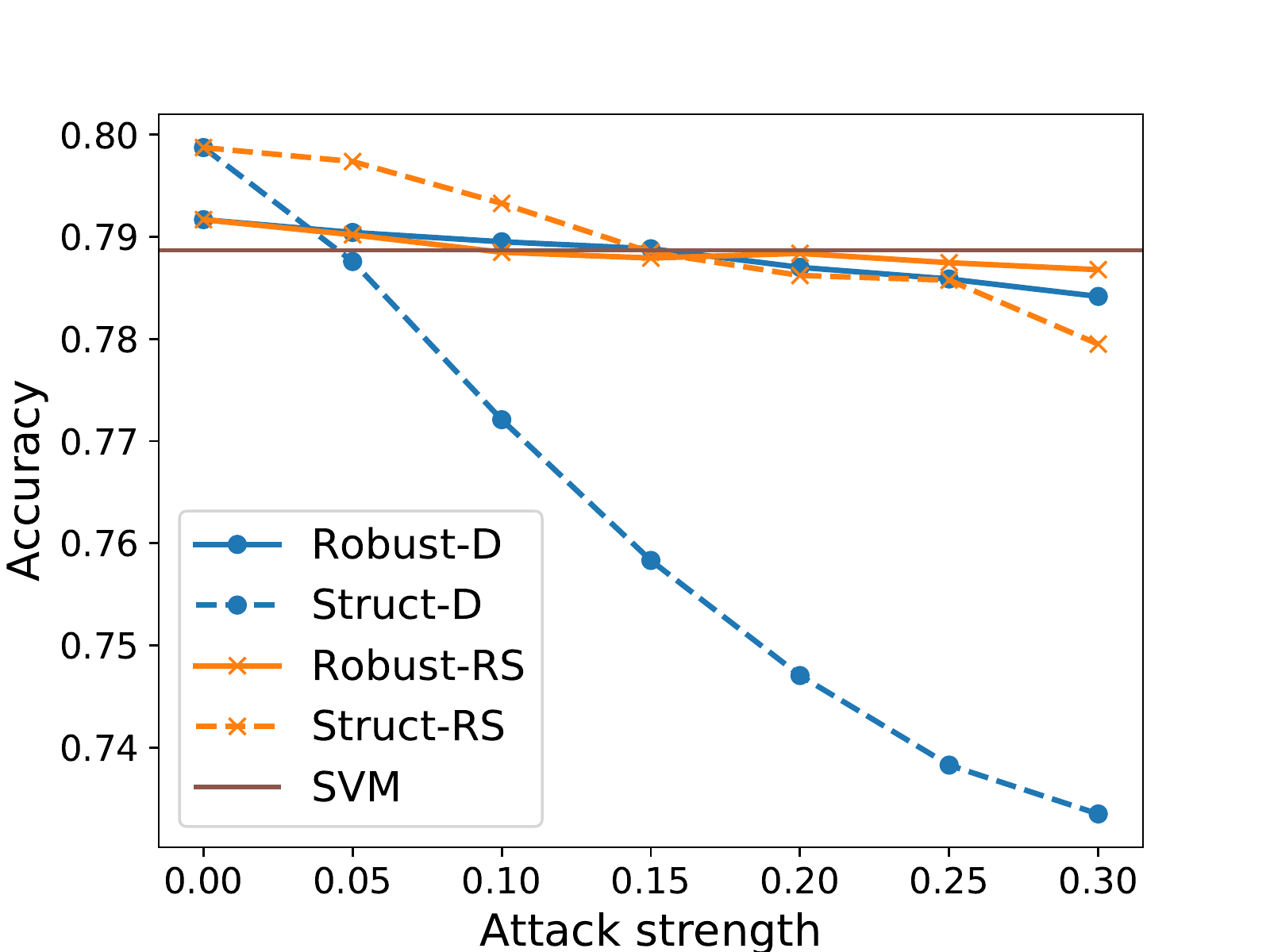}
		\caption{WebKB-H}
		\label{fig-Web-high-del}
	\end{subfigure}
	\hfill
	\begin{subfigure}[b]{0.23\textwidth}
		\centering
		\includegraphics[width=\textwidth, height=2.5cm]{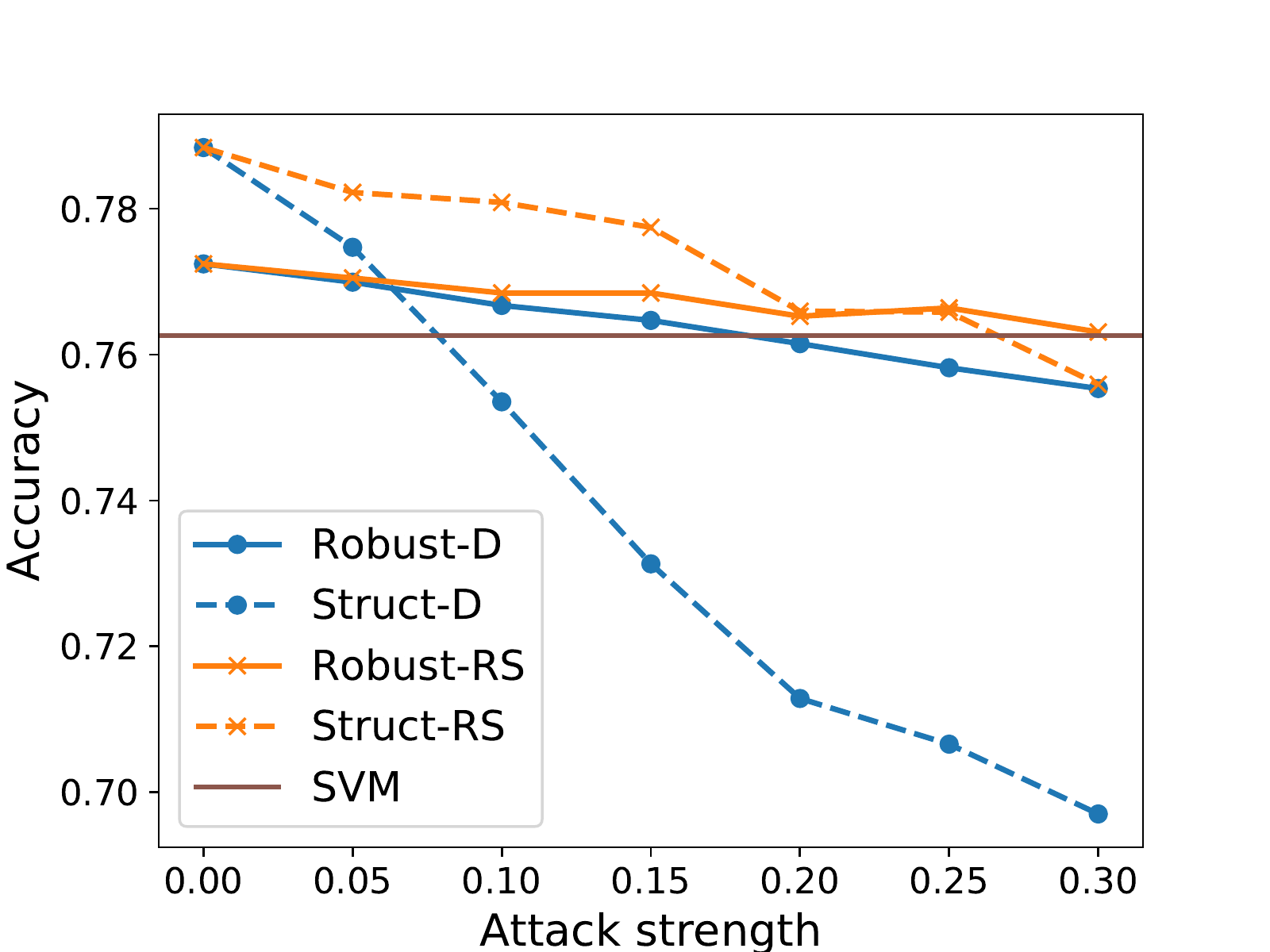}
		\caption{WebKB-L}
		\label{fig-Web-low-del}
	\end{subfigure}
	\hfill
	\begin{subfigure}[b]{0.23\textwidth}
		\centering
		\includegraphics[width=\textwidth, height=2.5cm]{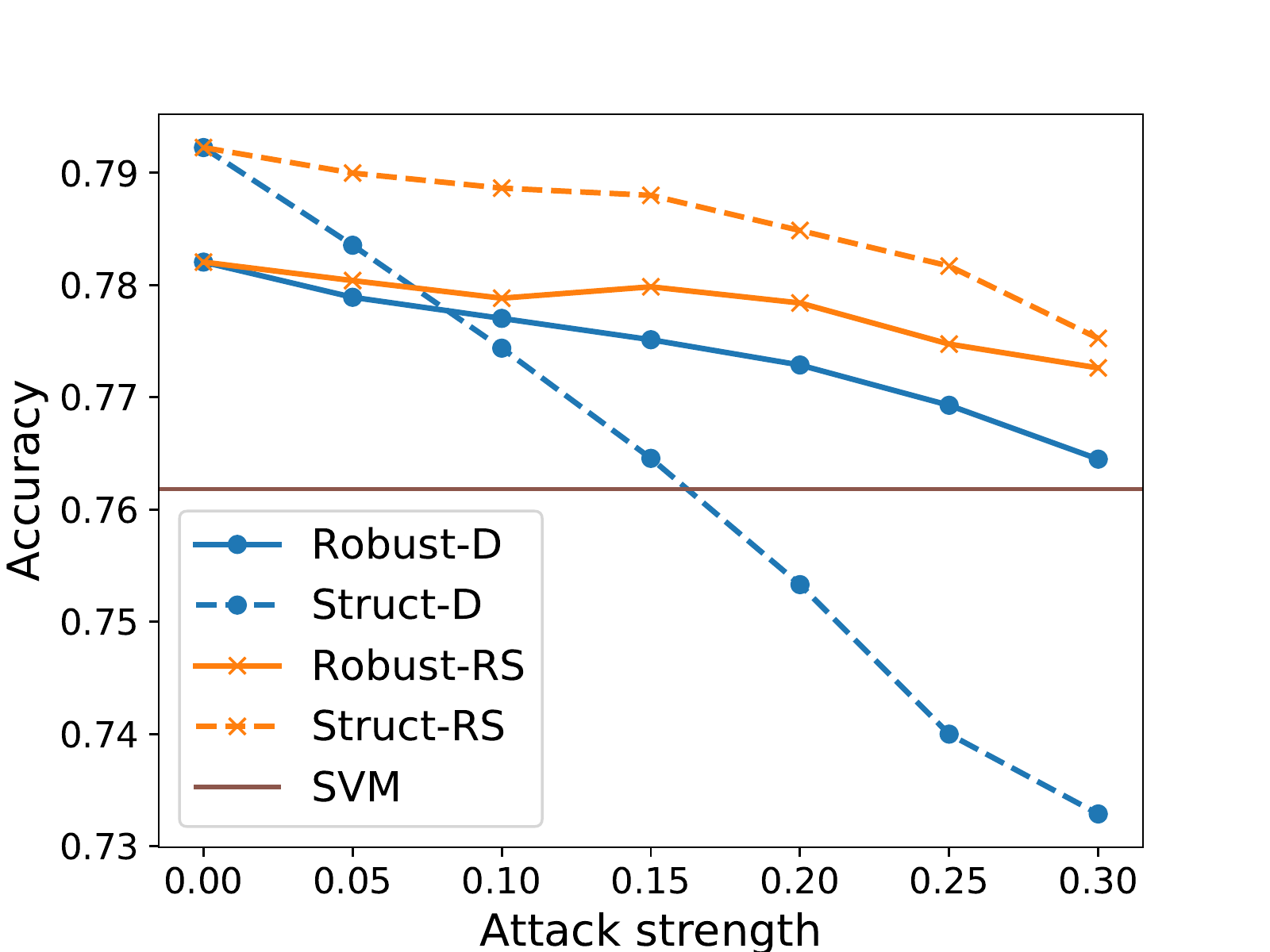}
		\caption{CiteSeer-H}
		\label{fig-Cite-high-del}
	\end{subfigure}
	\hfill
	\begin{subfigure}[b]{0.23\textwidth}
		\centering
		\includegraphics[width=\textwidth, height=2.5cm]{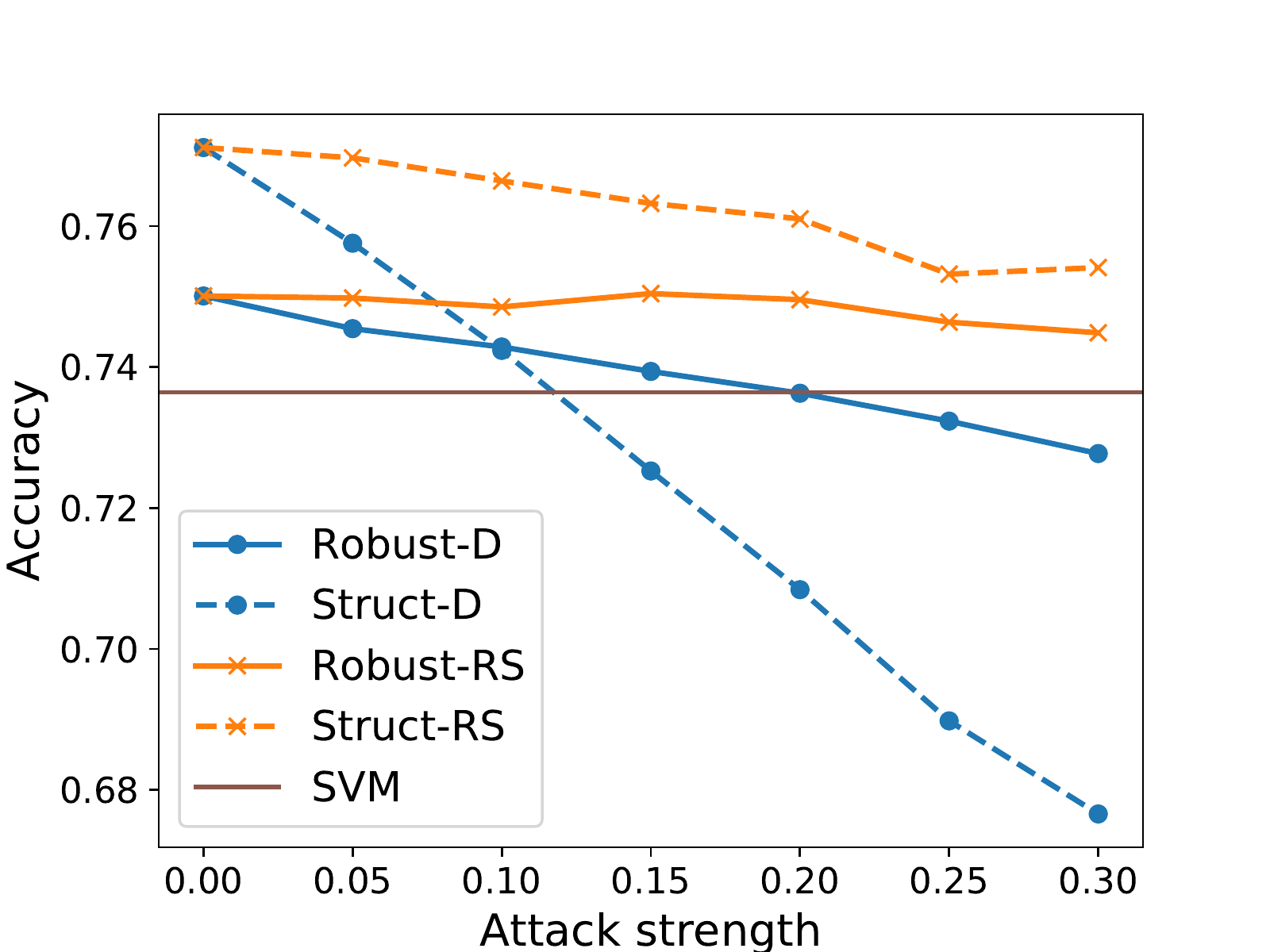}
		\caption{CiteSeer-L}
		\label{fig-Cite-low-del}
	\end{subfigure}
	\hfill
	\caption{Accuracies of AMN (dotted lines) and R-AMN (solid lines) under edge deletion attacks.  } 
	\label{fig-results-D}
\end{figure}

\subsection{ROBUSTNESS AGAINST STRUCTURAL ATTACKS} The AMN model jointly uses node features and structural information to do classification.
The impact of structural attacks depends on the importance of structure in classification, which in turn depends on the quality of information captured by node features.
%Since we are focusing on structural attacks, how discriminative the node features are could potentially affect the robustness of R-AMN.
%We thus limit the available node features and consider two corresponding settings,

%--- resutls of struct-AD
\begin{figure}[t]
	\centering
	\begin{subfigure}[b]{0.23\textwidth}
		\centering
		\includegraphics[width=\textwidth, height=2.5cm]{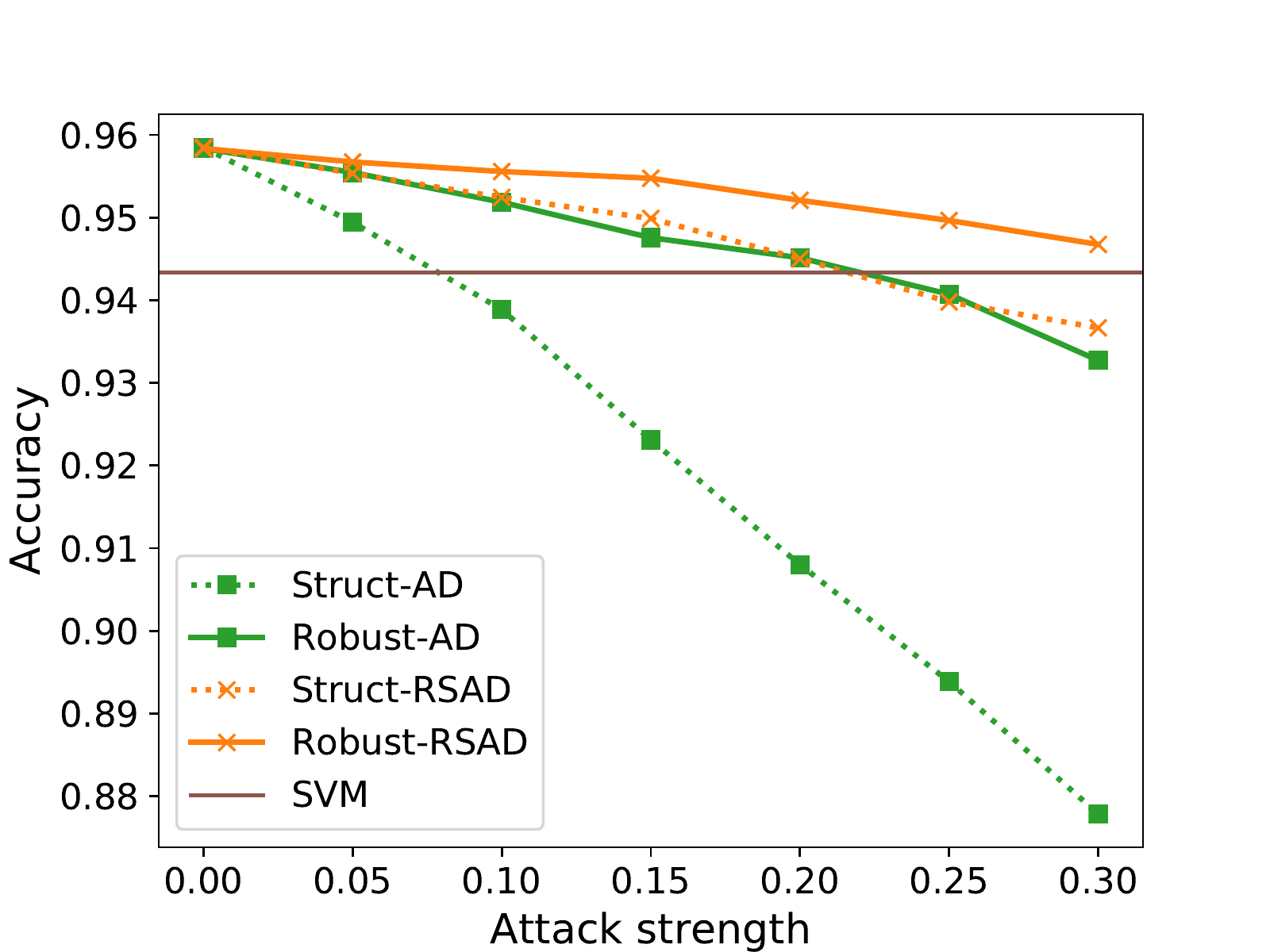}
		\caption{Reuters-H}
		\label{fig-Reuters-high-ad}
	\end{subfigure}
	\hfill	
	\begin{subfigure}[b]{0.23\textwidth}
		\centering
		\includegraphics[width=\textwidth, height=2.5cm]{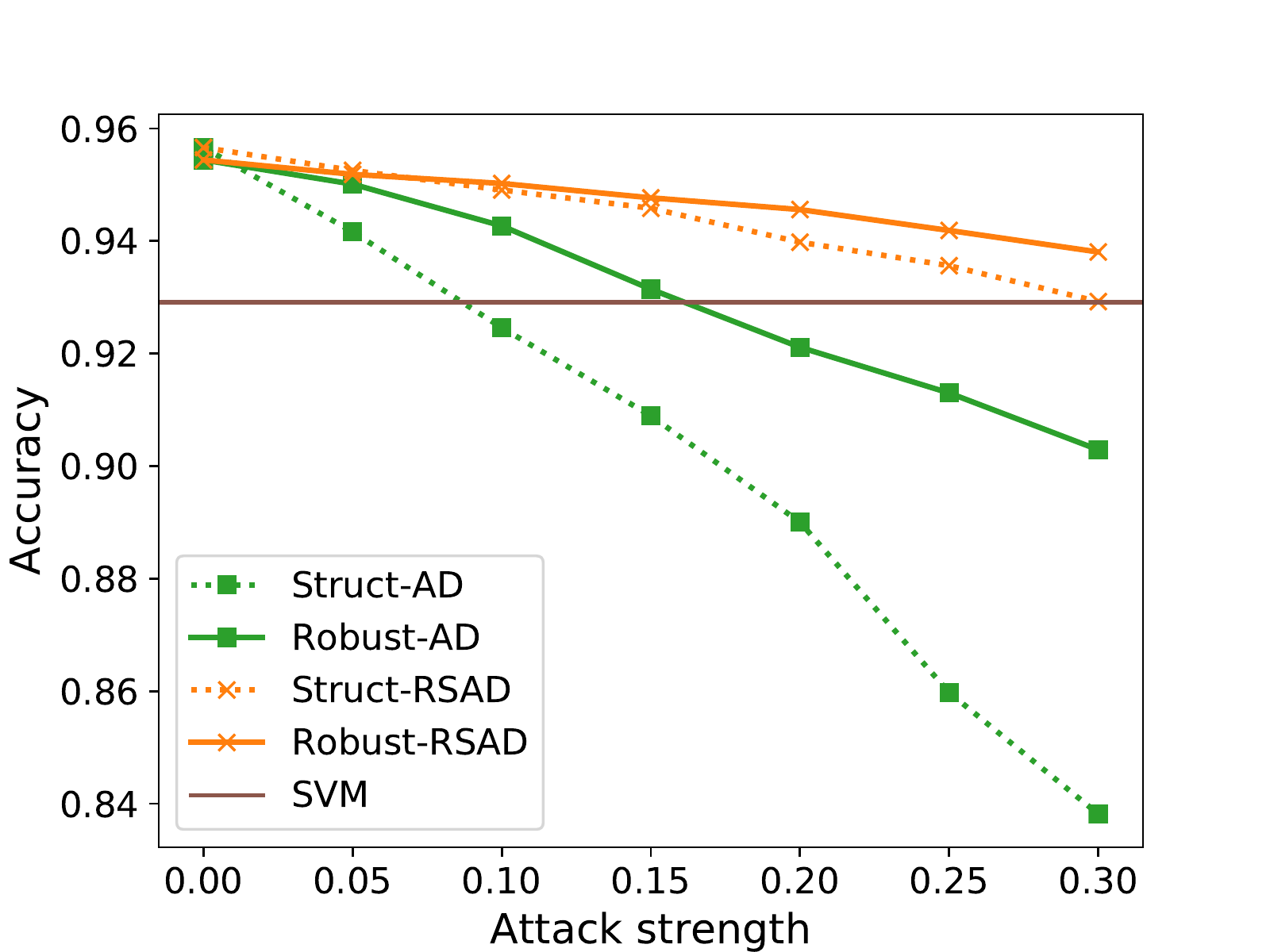}
		\caption{Reuters-L}
		\label{fig-Reuters-low-ad}
	\end{subfigure}
	\hfill	
	\begin{subfigure}[b]{0.23\textwidth}
		\centering
		\includegraphics[width=\textwidth, height=2.5cm]{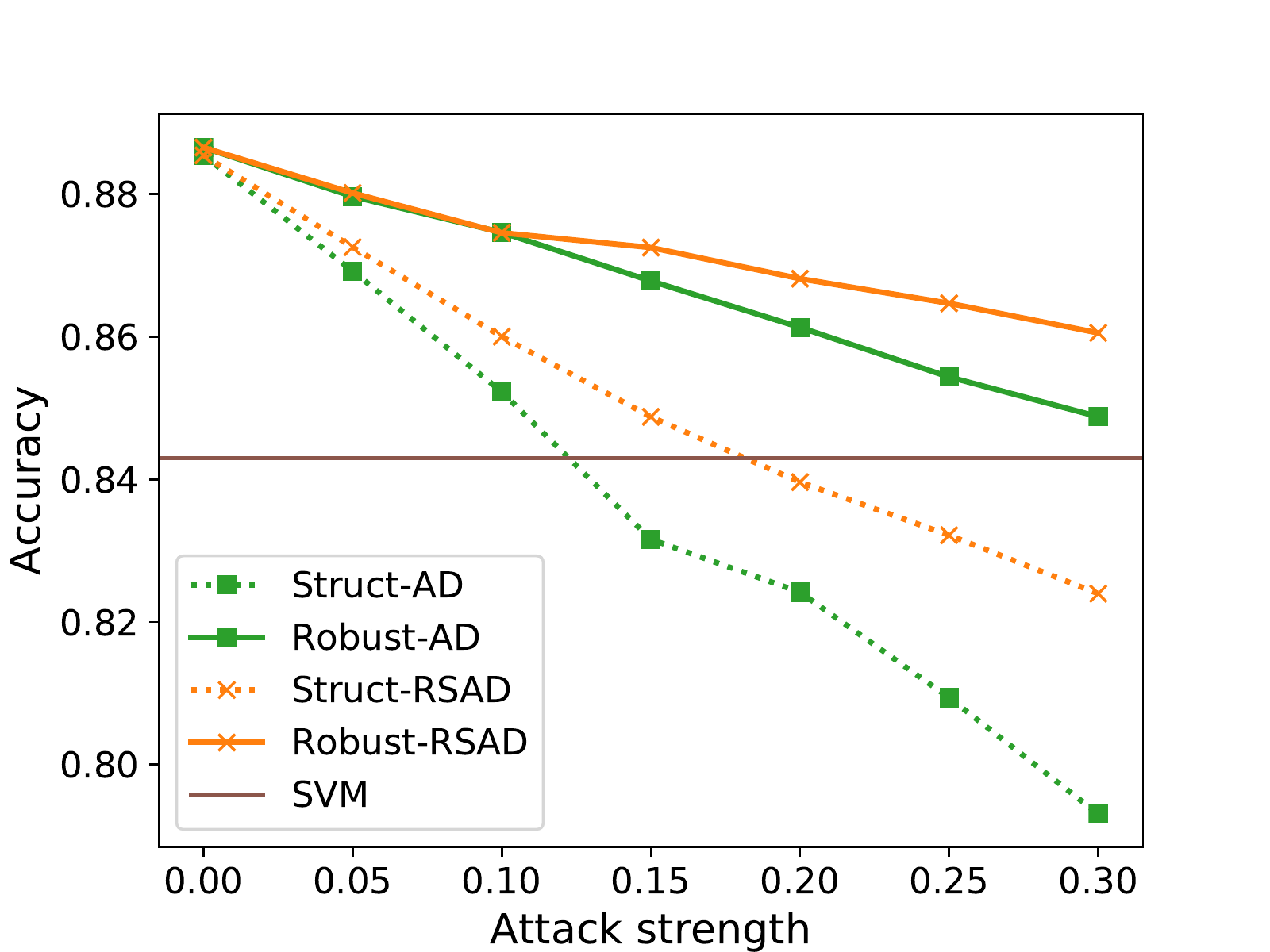}
		\caption{Cora-H}
		\label{fig-Cora-high-ad}
	\end{subfigure}
	\hfill	
	\begin{subfigure}[b]{0.23\textwidth}
		\centering
		\includegraphics[width=\textwidth, height=2.5cm]{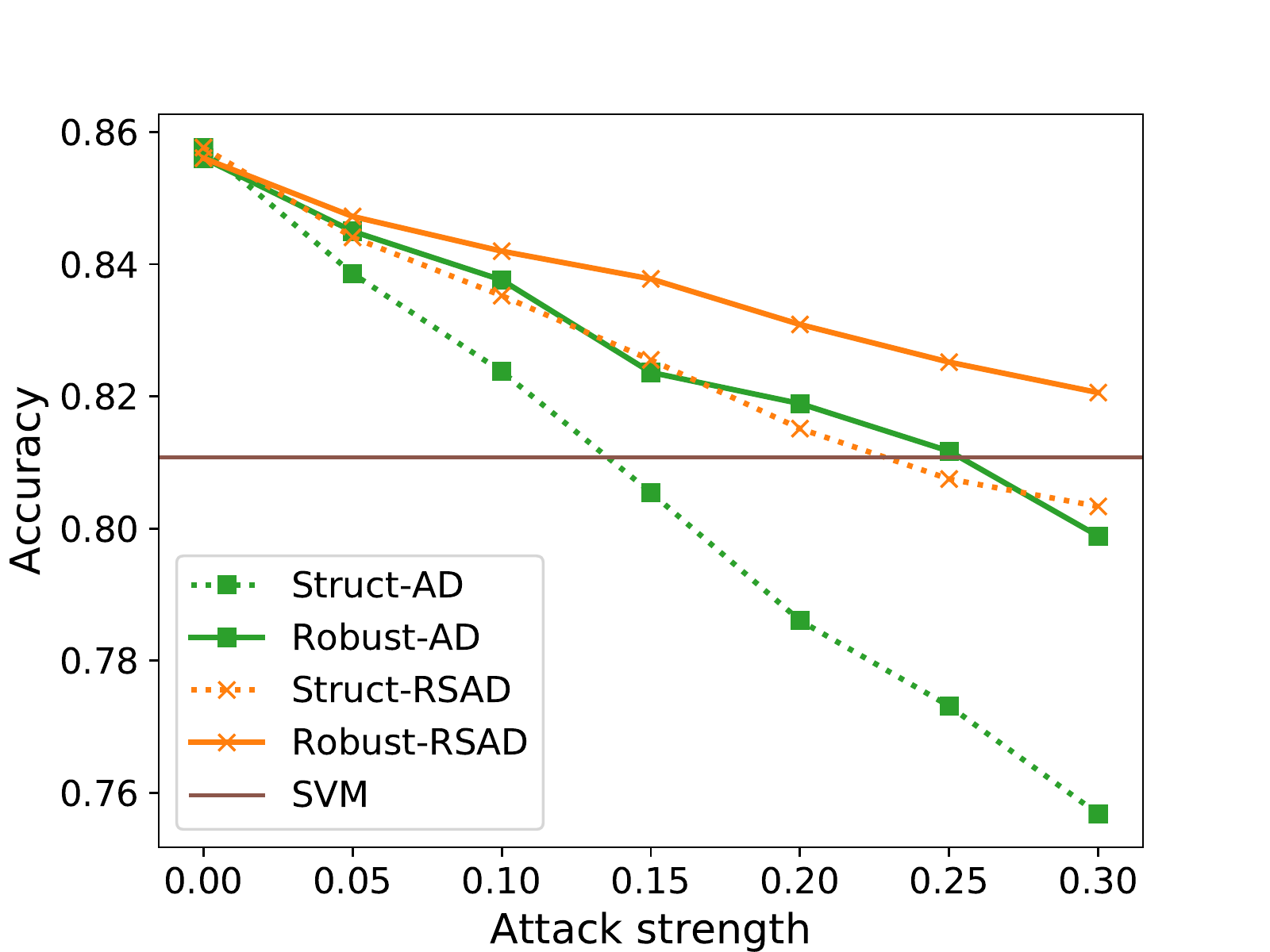}
		\caption{Cora-L}
		\label{fig-Cora-low-ad}
	\end{subfigure}
	\hfill
	\begin{subfigure}[b]{0.23\textwidth}
		\centering
		\includegraphics[width=\textwidth, height=2.5cm]{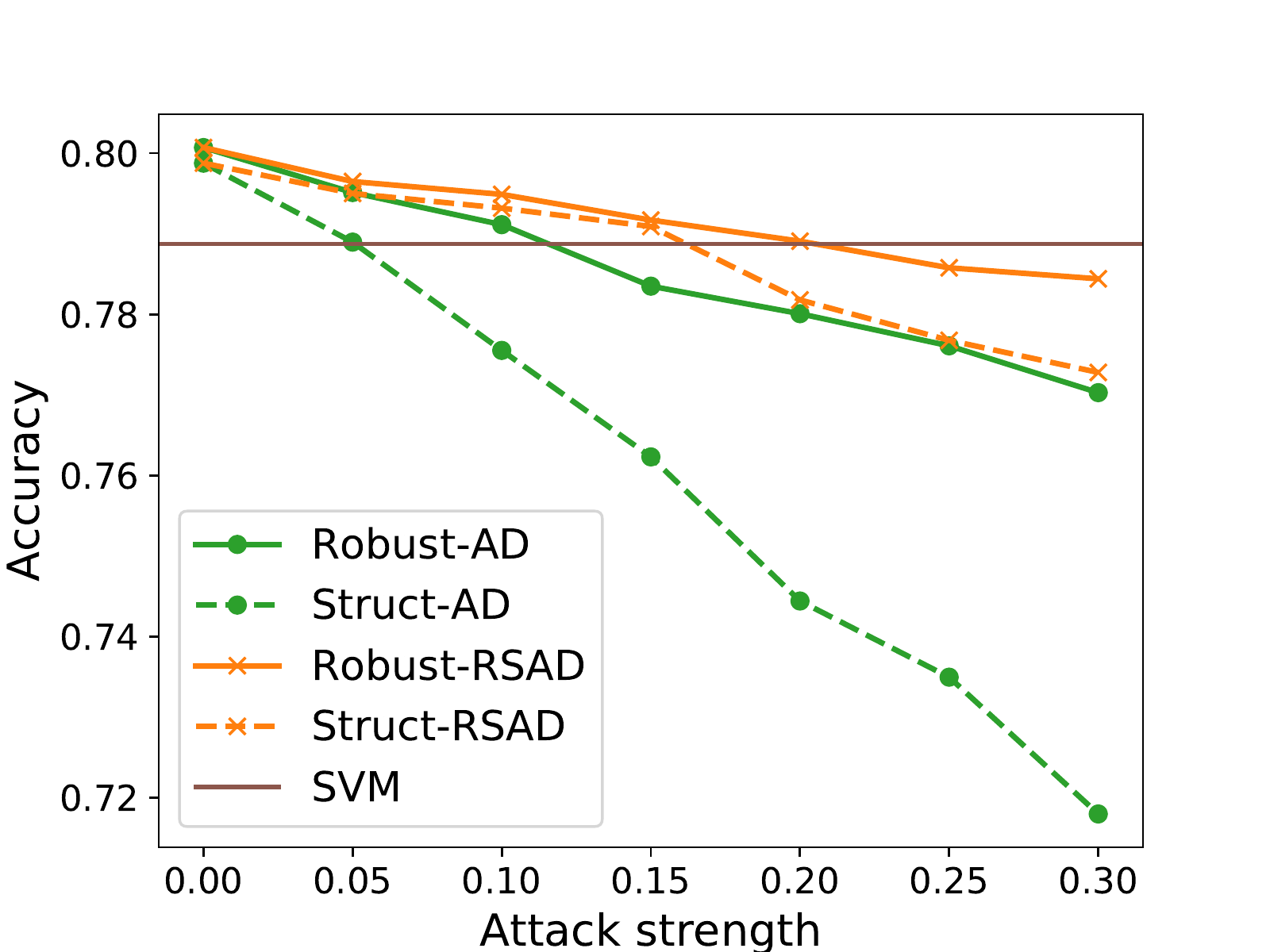}
		\caption{WebKB-H}
		\label{fig-Web-high-ad}
	\end{subfigure}
	\hfill	
	\begin{subfigure}[b]{0.23\textwidth}
		\centering
		\includegraphics[width=\textwidth, height=2.5cm]{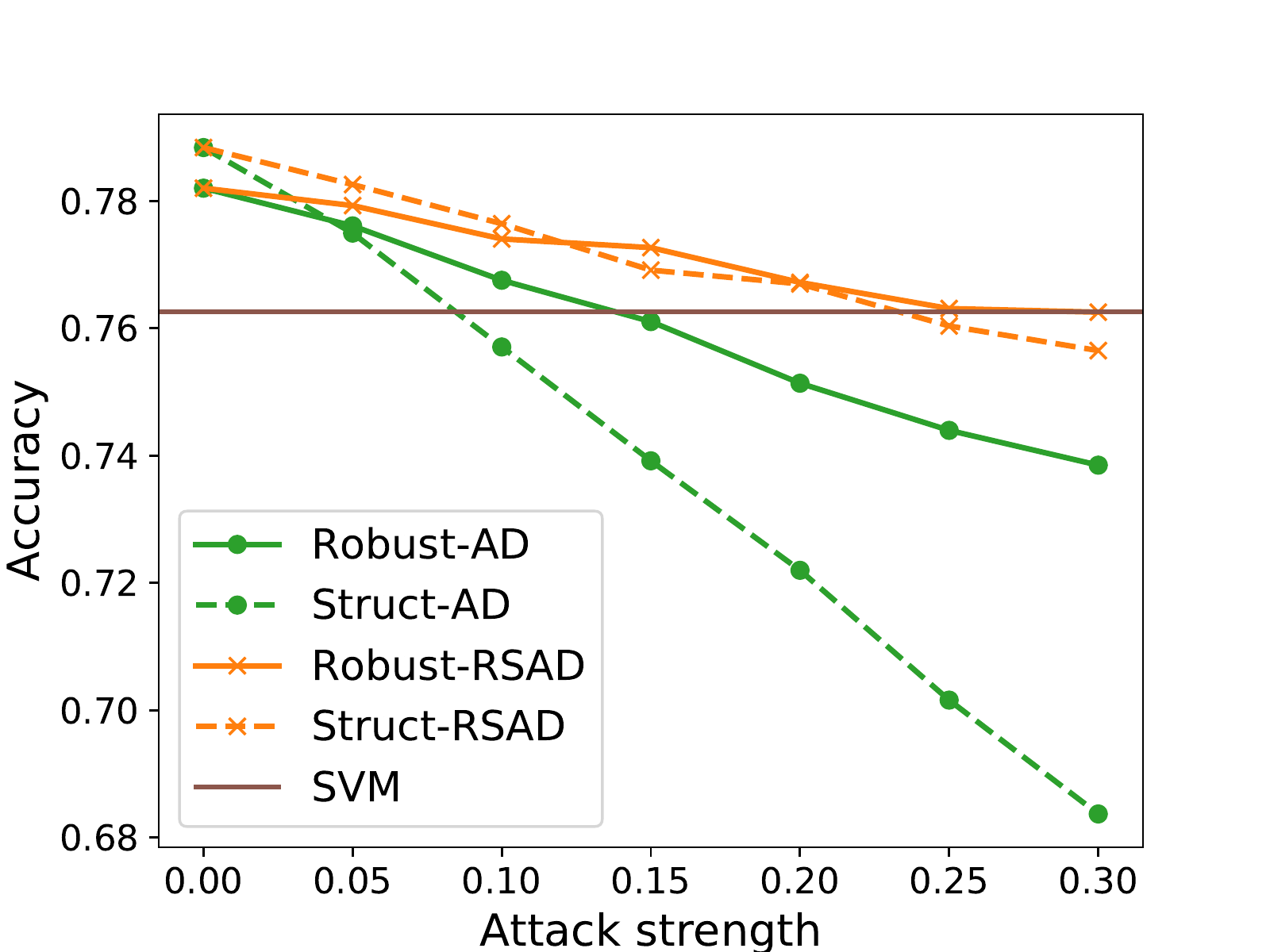}
		\caption{WebKB-L}
		\label{fig-Web-low-ad}
	\end{subfigure}
	\hfill	
	\begin{subfigure}[b]{0.23\textwidth}
		\centering
		\includegraphics[width=\textwidth, height=2.5cm]{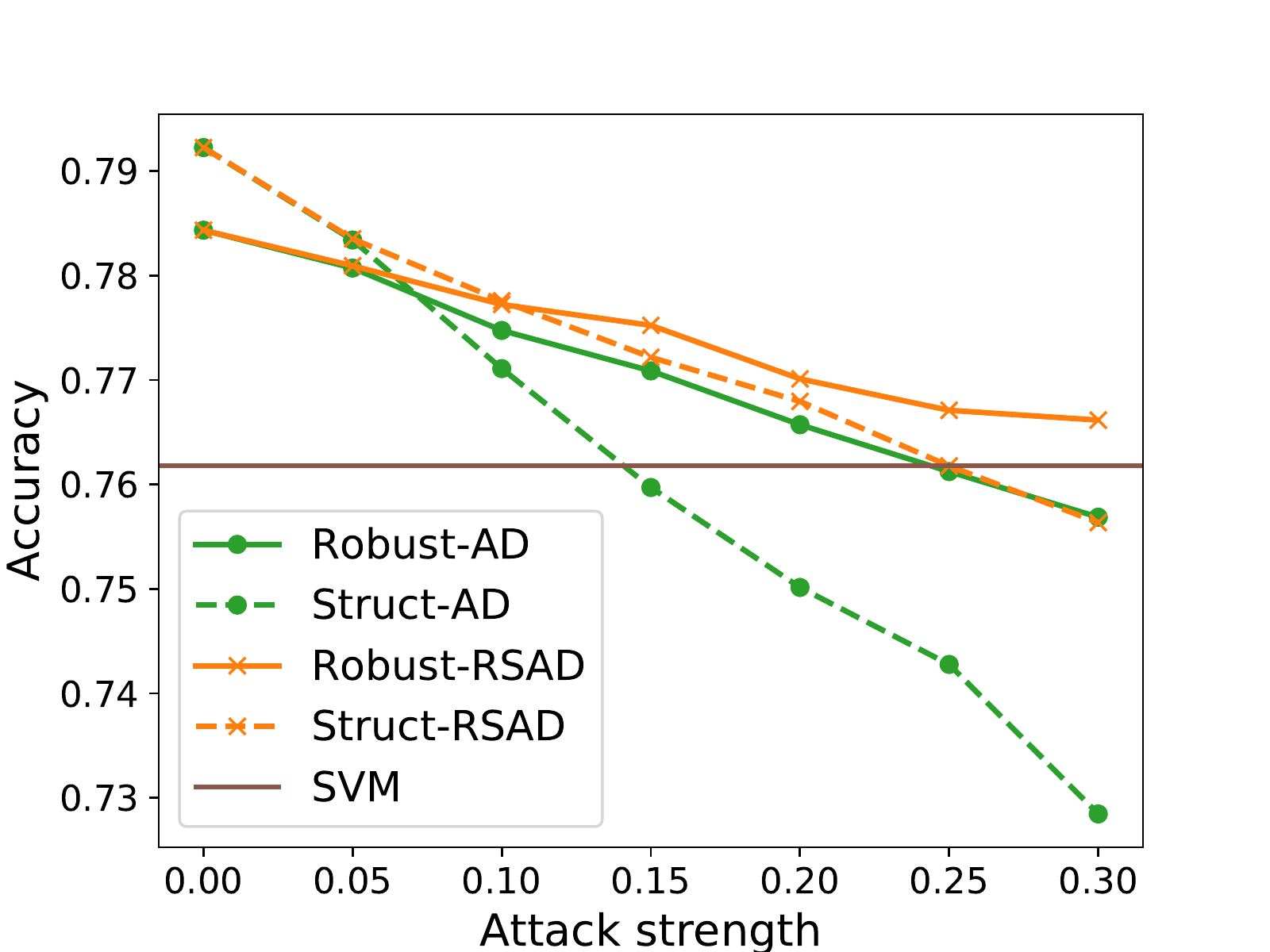}
		\caption{CiteSeer-H}
		\label{fig-Cite-high-ad}
	\end{subfigure}
	\hfill	
	\begin{subfigure}[b]{0.23\textwidth}
		\centering
		\includegraphics[width=\textwidth, height=2.5cm]{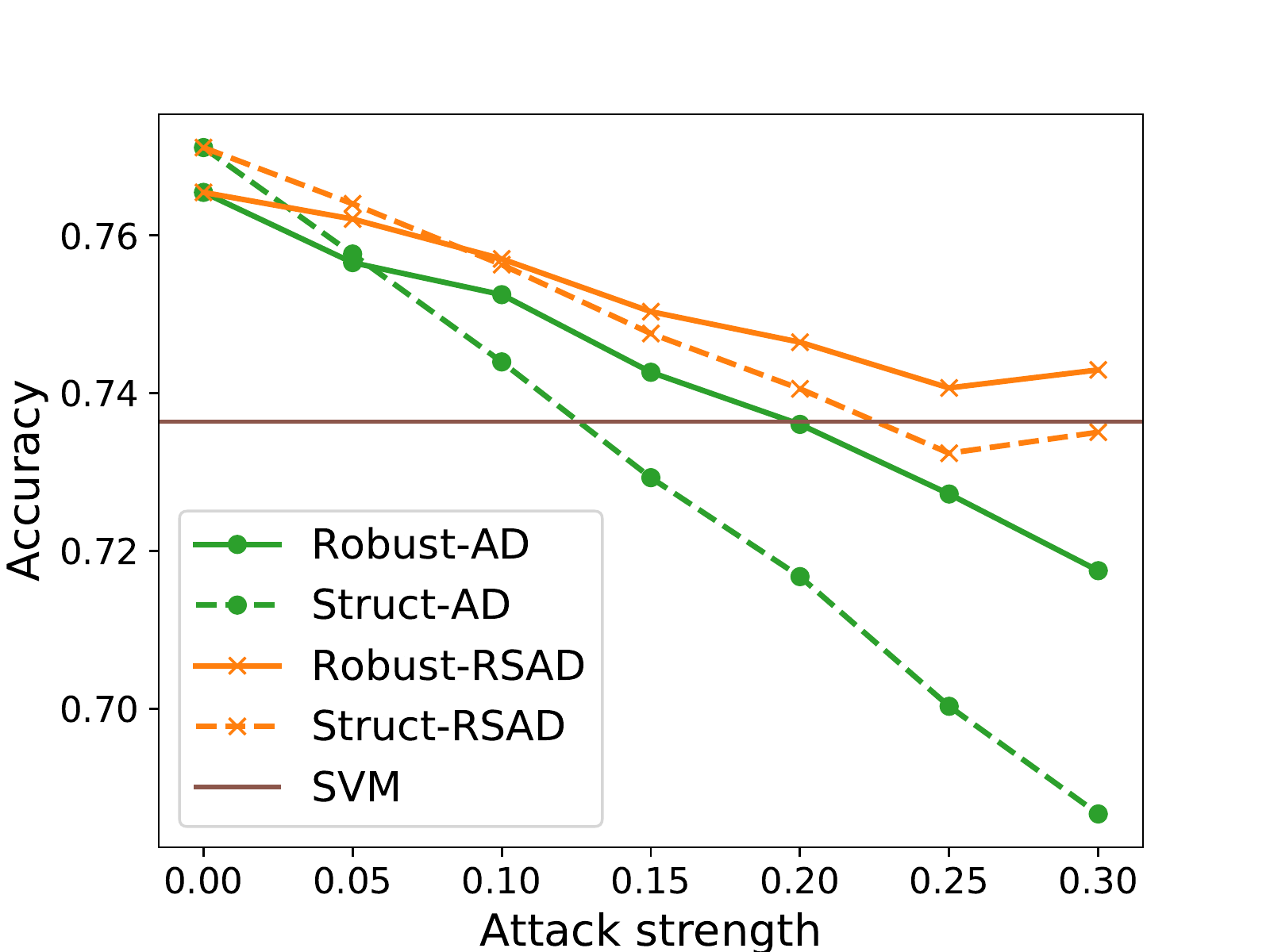}
		\caption{CiteSeer-L}
		\label{fig-Cite-low-ad}
	\end{subfigure}	
	\caption{Accuracies of AMN (dotted lines) and R-AMN (solid lines) under edge addition and deletion attacks } 
	\label{fig-results-AD}
\end{figure}

To study the impact of such node-specific information, we consider two settings:  high-discriminative (H-Dis), which uses more features, and low-discriminative  (L-Dis), which uses a smaller subset of features.
%, as indicated by the performance of the linear SVM classifier that uses only node features.
Specifically, for the Reuters dataset, we select $200$ top (in terms of frequencies) features in the H-Dis case (termed Reuters-H henceforth) and randomly selected $200$ out of the top $600$ features in the  L-Dis case (Reuters-L). For WebKB, we randomly select $200$ in L-Dis case (WebKB-L) and $300$ in H-Dis case (WebKB-H) out of the $1703$ features. For Cora, we use all the $1433$ features in H-Dis case (Cora-H) and randomly select $600$ features in the L-Dis case (Cora-L). For CiteSeer, we randomly select $500$ in L-Dis case (CiteSeer-L) and $1000$ in H-Dis case (CiteSeer-H) out of the $3703$ features. In our experiments, we use cross-validation on the training set to tune the two parameters of the R-AMN: the trade-off parameter $C$ and the adversarial budget $b$. We note that when tuning R-AMN, the defender has no knowledge of the strength of the attacker. We use a simulated attacker that can modify the validation set (in cross-validation) with $b = 0.1$, meaning that the attacker can change $10\%$ of the edges.

We consider a baseline attacker that can randomly add links between nodes belonging to different classes and  delete links between nodes with the same labels.
We term such an attack \textit{Struct-RSAD} (Remove Same and Add Different). In the case where the attacker is only allowed to delete links, we term the attack as \textit{Struct-RS}. We test R-AMN under four structural attacks: \textit{Struct-D} and \textit{Struct-AD} (our attacks, deleting links in the former, and adding or deleting in the latter), and \textit{Struct-RS} and \textit{Struct-RSAD} (heuristic baseline attacks above). We denote the performance of R-AMN exposed to these attacks as \textit{Robust-D}, \textit{Robust-AD}, \textit{Robust-RS}, \textit{Robust-RSAD}, respectively. We overload the notations by denoting the performance of AMN under the four attacks as \textit{Struct-D}, \textit{Struct-AD}, \textit{Struct-RS}, \textit{Struct-RSAD}, respectively.

Fig.~\ref{fig-results-D} and Fig.~\ref{fig-results-AD} show the
average accuracy (over $20$ independent data splits) of AMN (dotted
lines) and R-AMN (solid lines) under structural attacks as well as the
accuracy of a linear SVM classifier. First, by modifying a small
portion of the links in the graph, the accuracy of AMN drops below
that of SVM (which does not exploit relations), meaning that relations
among data points indeed introduce extra vulnerabilities.  Moreover,
structural attacks tend to be more severe when the node features are
less discriminative (the L-Dis case), where linking information plays
a relatively more important role in classification.  Notably, the
accuracy of R-AMN drops significantly slower than that of AMN under
structural attacks in all settings and stays above that of the SVM,
even when a relatively large fraction of the node connections are modified. These show that robust AMN preserves the benefits of using structural information even if network structure is maliciously modified.

% ---- deep attack
\begin{figure}[!t]
	\begin{subfigure}[b]{0.23\textwidth}
		\centering
		\includegraphics[width=\textwidth, height=2.5cm]{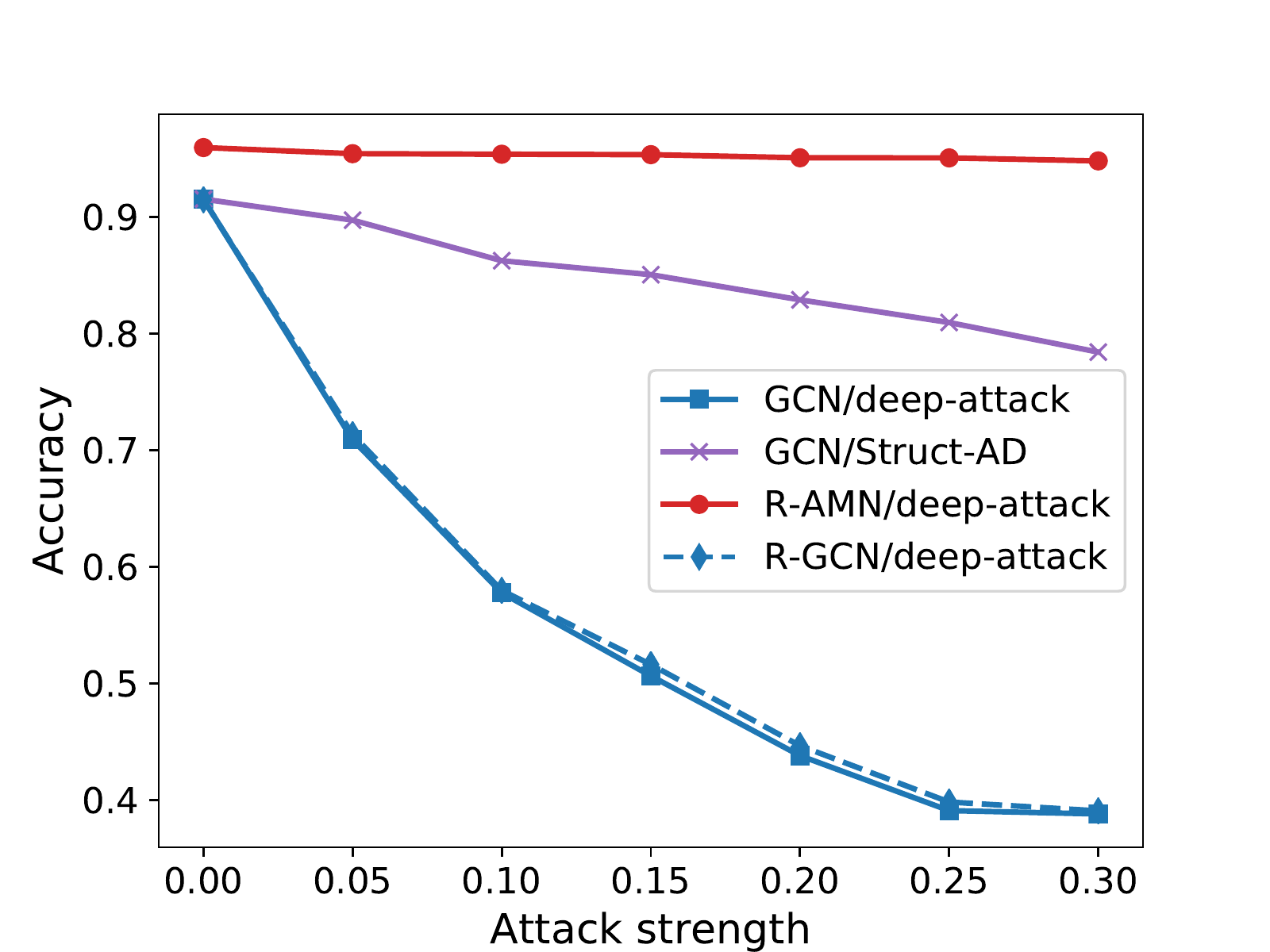}
		\caption{Reuters-H}
		\label{fig-deep-attack-Reuters-H}
	\end{subfigure}	
	\hfill
	\begin{subfigure}[b]{0.23\textwidth}
		\centering
		\includegraphics[width=\textwidth, height=2.5cm]{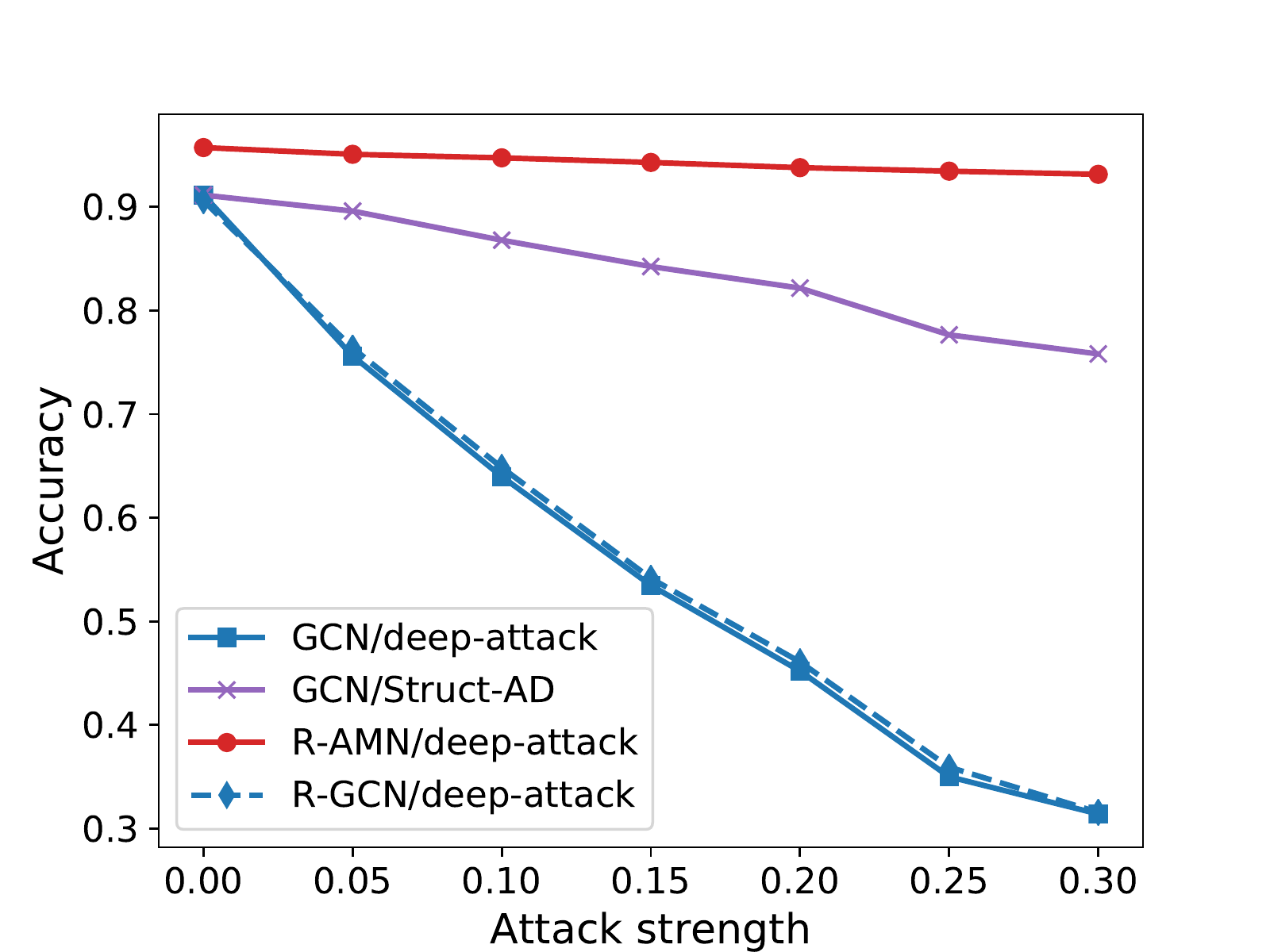}
		\caption{Reuters-L}
		\label{fig-deep-attack-Reuters-L}
	\end{subfigure}
	\hfill
	\begin{subfigure}[b]{0.23\textwidth}
		\centering
		\includegraphics[width=\textwidth, height=2.5cm]{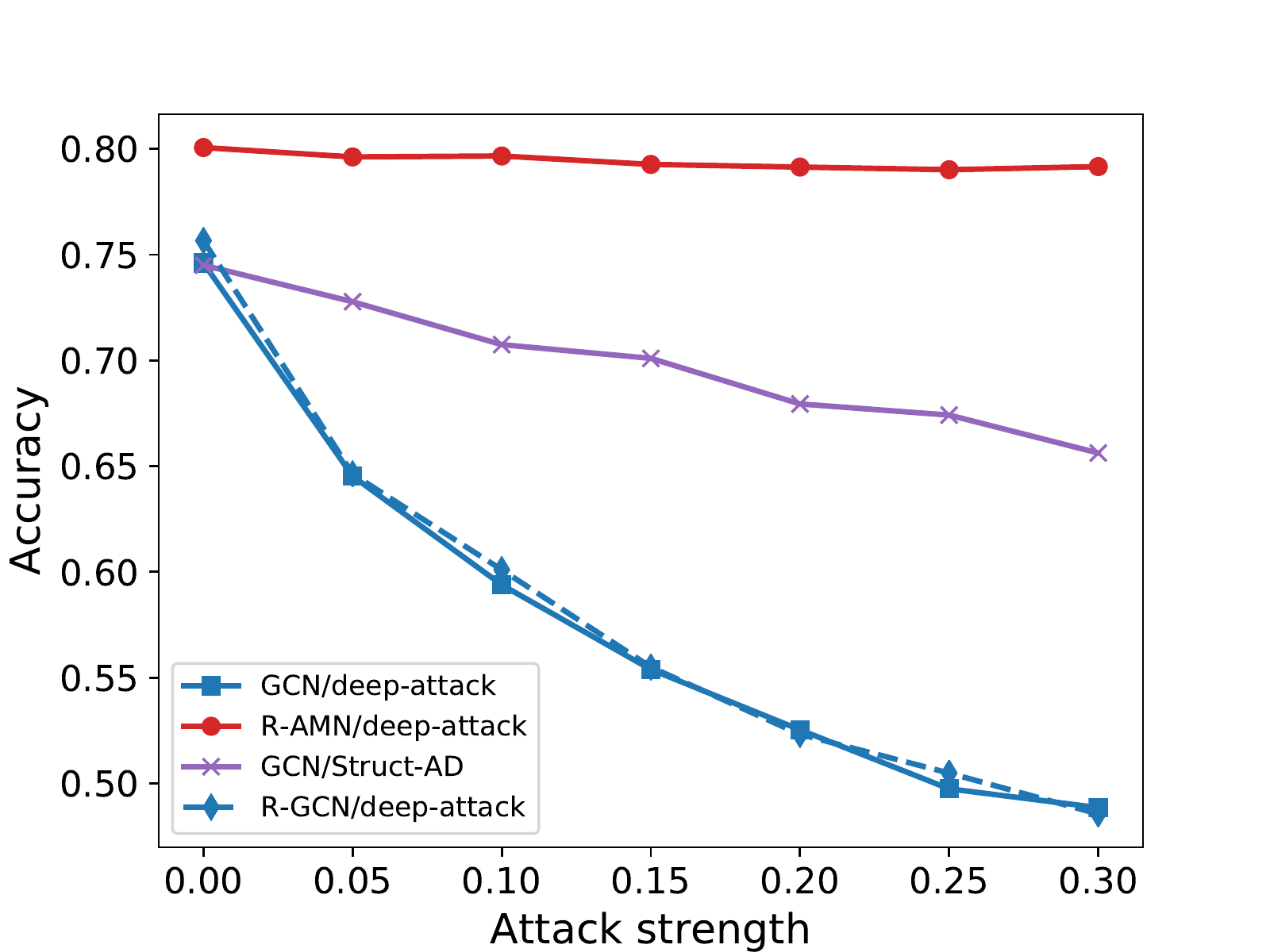}
		\caption{WebKB-H}
		\label{fig-deep-attack-Web-H}
	\end{subfigure}	
	\hfill
	\begin{subfigure}[b]{0.23\textwidth}
		\centering
		\includegraphics[width=\textwidth, height=2.5cm]{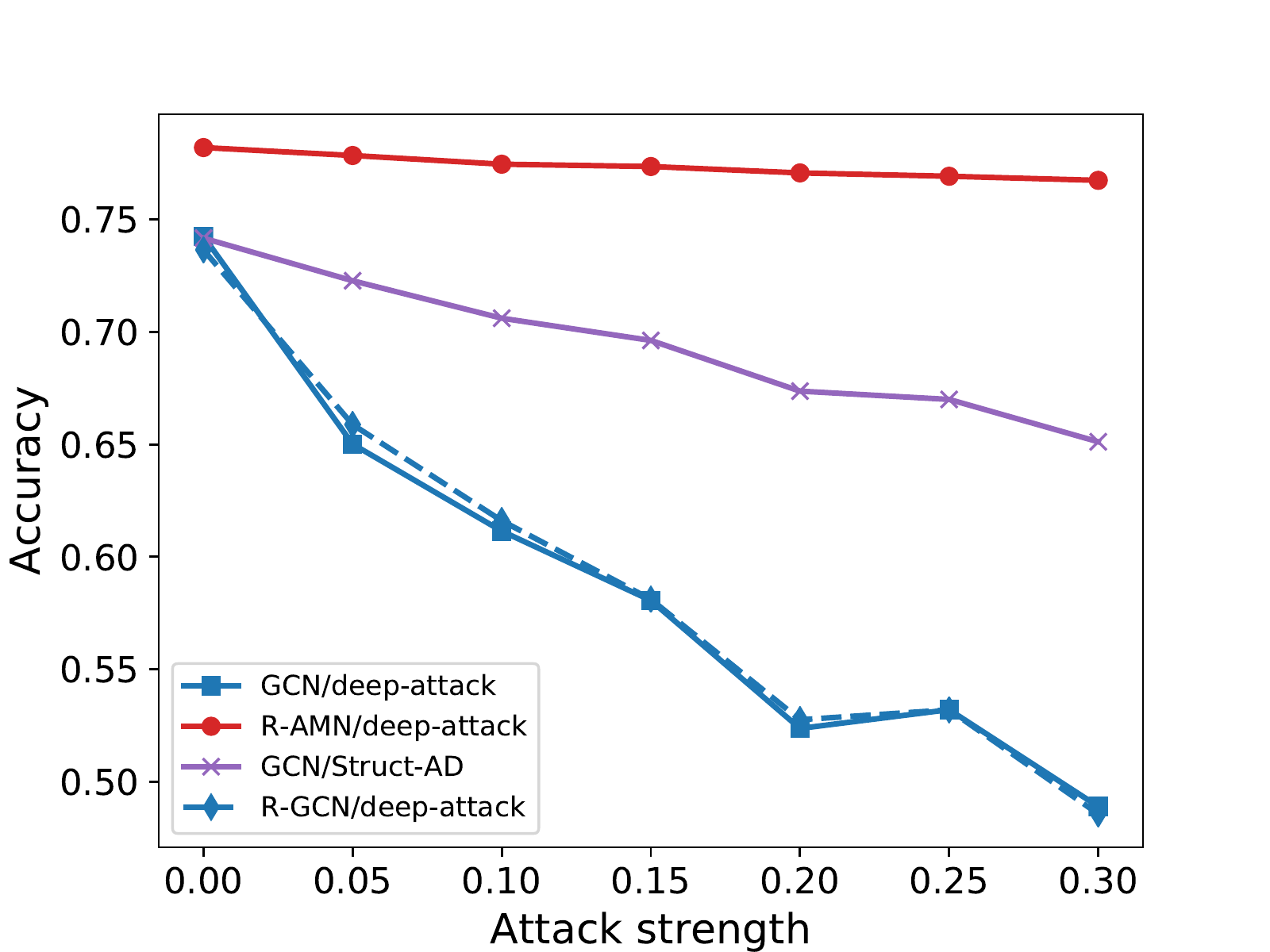}
		\caption{WebKB-L}
		\label{fig-deep-attack-Web-L}
	\end{subfigure}		
	\caption{R-AMN and R-GCN under deep-attack; GCN under deep-attack and Struct-AD attack.} 
	\label{fig-deep-results}
\end{figure}

%--- compare on clean data
\begin{figure}[t!]
	\begin{subfigure}[b]{0.23\textwidth}
		\centering
		\includegraphics[width=\textwidth, height=2.5cm]{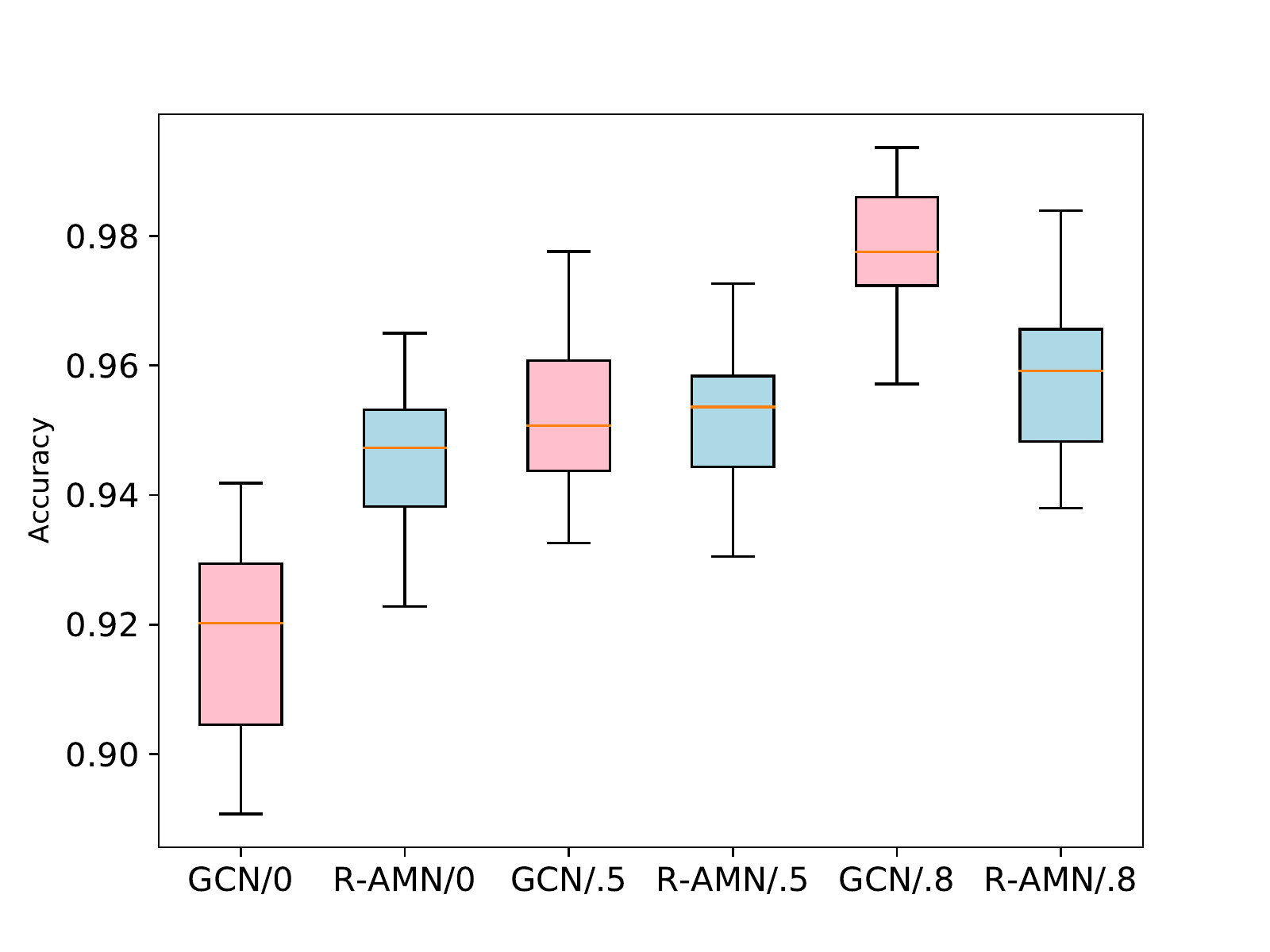}
		\caption{Reuters-H}
		\label{fig-reuters-H-compare}
	\end{subfigure}
	\hfill
	\begin{subfigure}[b]{0.23\textwidth}
		\centering
		\includegraphics[width=\textwidth, height=2.5cm]{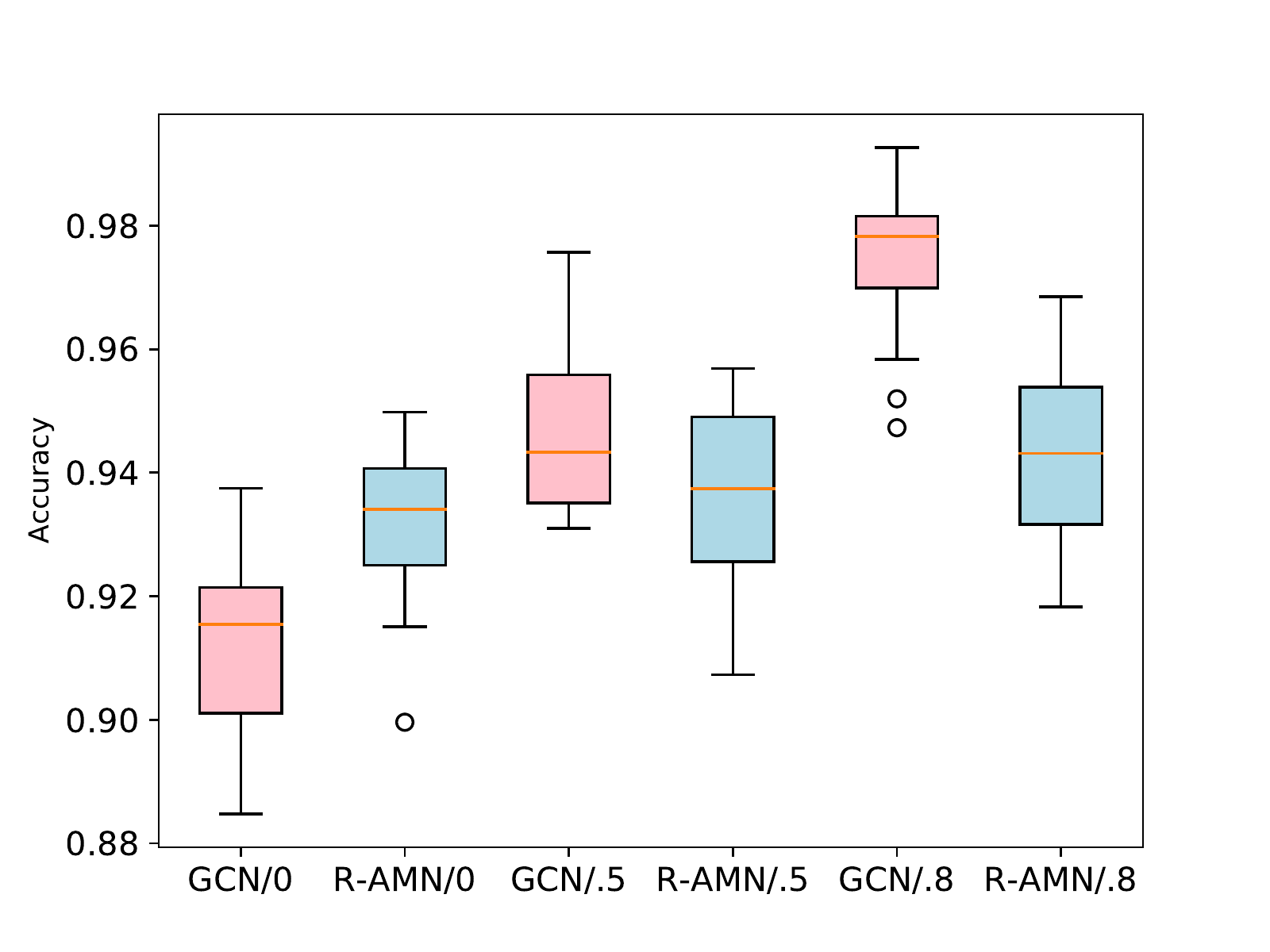}
		\caption{Reuters-L}
		\label{fig-reuters-L-compare}
	\end{subfigure}	
	\hfill
	\begin{subfigure}[b]{0.23\textwidth}
		\centering
		\includegraphics[width=\textwidth, height=2.5cm]{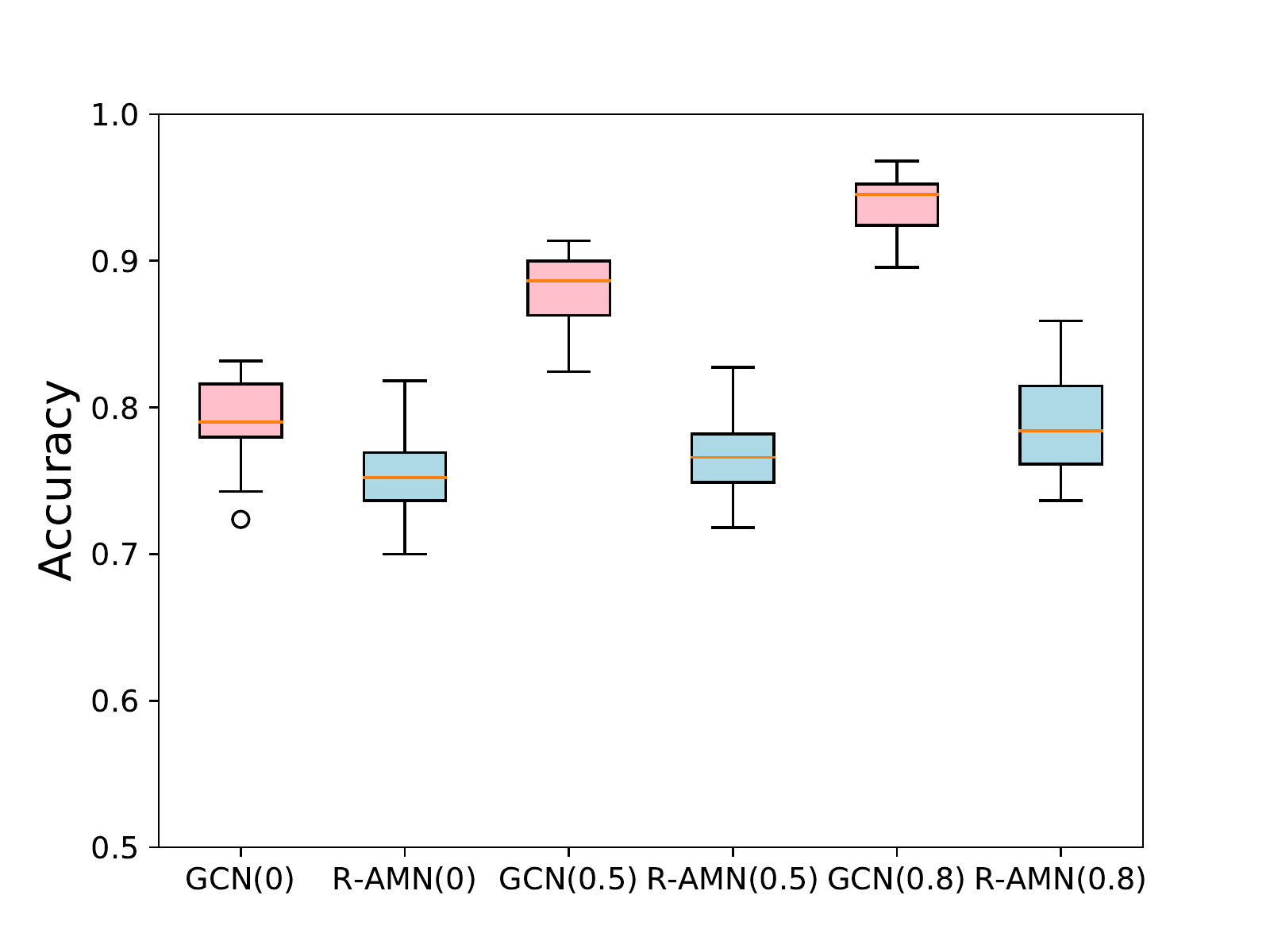}
		\caption{WebKB-H}
		\label{fig-Web-H-compare}
	\end{subfigure}
	\hfill
	\begin{subfigure}[b]{0.23\textwidth}
		\centering
		\includegraphics[width=\textwidth, height=2.5cm]{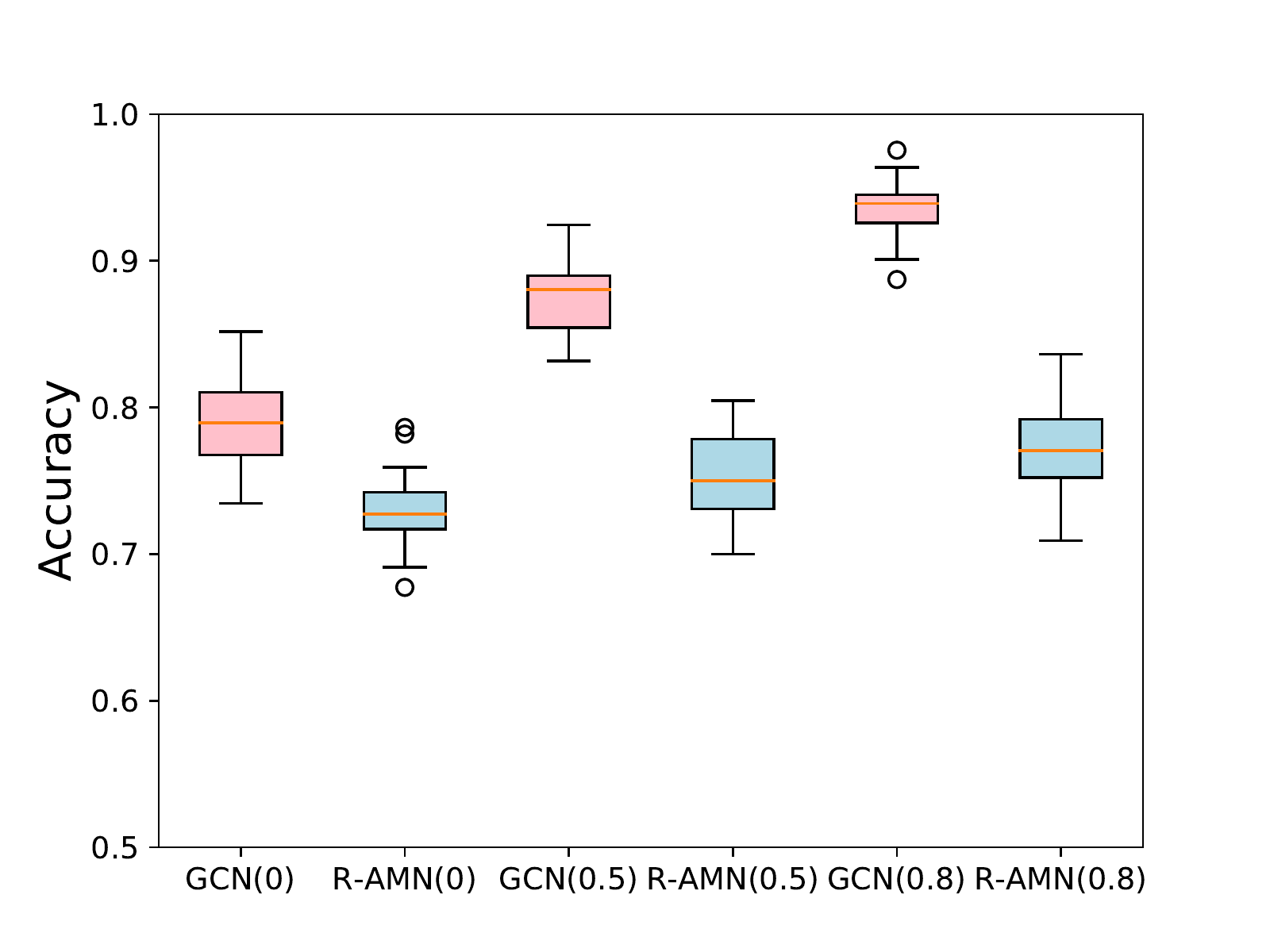}
		\caption{WebKB-L}
		\label{fig-Web-L-compare}
	\end{subfigure}		
	\caption{R-AMN and GCN on non-adversary data as graphs are purified, e.g. R-AMN($0.5$) stands for R-AMN when noisy edges are deleted with probability $0.5$.} 
	\label{fig-compare}
\end{figure}

\subsection{ROBUSTNESS AGAINST DEEP LEARNING BASED ATTACKS}
Having observed that our approach for robust AMN is indeed robust against our newly designed attacks on AMN, it is natural to wonder whether robust AMN remains robust against recent attacks on graph convolutional networks (GCNs).
The answer is non-obvious: on the one hand, attacks on GCN may not transfer to AMN---although there is ample prior evidence that attacks do often transfer from one learning approach to another~\citep{vorobeychik2018adversarial}; on the other hand, attacks on GCN target transductive learning, and as such also \emph{poison} the data on which the AMN would be trained.
To this end, we use a recent 
%Recently, there are emerging studies on structural attacks against deep learning (mainly Graph Convolutional Networks) based node classification models. Specifically, the
structural attack on GCN (termed deep-attack) proposed in \citep{DBLP:journals/corr/abs-1902-08412}, which has demonstrated impressive attack performance and transferability to neural network embedding-based approaches.  We thus test our R-AMN against this deep-attack, compared to GCN.
In addition, we compare the performance of R-AMN with a GCN based classifier \citep{kipf2016semi} on non-adversarial data.

GCN classifies nodes in a transductive setting, where labeled and unlabeled nodes reside in the \textit{same} graph; while R-AMN is trained over a \textit{training graph}, and makes predictions over an unseen \textit{test graph}.
To adapt R-AMN to the transductive setting,  we use deep-attack (with the same configurations as in \citep{DBLP:journals/corr/abs-1902-08412}, e.g., strongest ``Meta-Self'' mode and $0.1/0.1/0.8$ train/validation/test split) to modify the training graph and test graph separately (these can be subsets of the same larger graph, as is the case in transductive learning). Then we train R-AMN on the \textit{attacked} training graph and test it on the \textit{attacked} test graph. We also test the performance of GCN under deep-attack and our proposed structural attack \textit{Struct-AD} on the test graph. In addition, we test a robust GCN model (termed R-GCN) under deep-attack, which is based on adversarial training approach proposed in \citep{xu2019topology} on the test graph. The accuracies of R-AMN, GCN, and R-GCN under these attacks are presented in Fig.~\ref{fig-deep-results} for the Reuters and WebKB datasets (the appendix presents similar results for the Cora and CiteSeer datasets).
The main observation is that where R-GCN cannot defend against such deep-attack in a transductive setting, our proposed R-AMN is essentially invariant under deep-attack, in contrast to GCN, which is highly vulnerable to this attack, and also quite vulnerable to our proposed structural attacks aimed at AMN. 
%We note that the accuracies of R-AMN and GCN are not directly comparable as they are tested in different settings. However, the results indeed show that R-AMN is robust (almost immune) to the strong deep-attack, which can totally dysfunction GCN. 

Finally, we compare R-AMN and GCN on non-adversarial data, which are
evenly split into training and test graphs. 
We note that R-AMN ignores the links between the training and test
(sub-)graphs. 
The results are shown in Fig.~\ref{fig-compare}.
To interpret these, consider first GCN/0 and R-AMN/0 bars, which
correspond to the direct performance comparison on the given data.
We can observe that the difference in accuracy between R-AMN and GCN
in these cases is either small (on WebKB data) or, in fact, R-AMN
actually outperforms GCN (on Reuters data).
It's this latter observation that is surprising.
The reason is that Reuters data contains $\sim 10\%$ of noisy links,
that is, links connecting pairs of nodes with different labels.
This can be viewed as another symptom of ragility of GCN, but in any
case, we next consider what happens when we remove each noisy link
with some probability ($p=0.5$ or $p=0.8$).
The results are presented as bars with R-AMN/$p$ and GCN/$p$, where
$p$ corresponds to this probability of removed noisy links, and, as
expected, GCN performance improves as we improve data quality.
This improvement is significant when the graph is not particularly
noisy (WebKB), but the gap between R-AMN and GCN remains relatively
small when enough noisy links remain (Reuters, as well as Cora and CiteSeer; see the appendix, Fig.~6).
This further attests to greater robustness of R-AMN, but does exhibit
some cost in terms of accuracy on non-adversarial data, if this data
is sufficiently high quality.	

\section{CONCLUSION}
	
	We study robustness of the associative Markov network classifier under
	test-time attacks on network structure, where an attacker can delete and/or add links in the underlying graph.
	We formulate the task of robust
	learning as a bi-level program and propose an approximation algorithm
	to efficiently solve it.  Our experiments on real-world datasets
	demonstrate that the performance of robust AMN  degrades gracefully
	even under large adversarial modifications of the graph structure,
	preserving the advantages of using structural information in
	classifying relational data. We additionally compare robust AMN with
	the state-of-the-art deep learning based approaches in the
	transductive setting and demonstrate that robust AMN is significantly
	more robust to structural perturbations compared to deep graph embedding
	methods while sacrificing little performance on non-adversarial data, except when network data is of extremely high quality (a rarity in practice).
	
	\subsubsection*{Acknowledgements}
	
	This work was partially supported by the National Science Foundation (grants IIS-1905558 (CAREER) and IIS-1903207) and Army Research Office (grants W911NF1810208 (MURI) and W911NF1910241).
	
	%\subsubsection*{References}	
\bibliographystyle{abbrvnat}	
\bibliography{citation}

\newpage
\appendix

\onecolumn
\section*{Appendix}
\section{Formulation of Convex Quadratic Program}
We explicitly write out the convex quadratic program for learning robsut AMN, which is omitted in the main paper. By LP duality, we can replace the attacker's maximization problem using its dual minimization problem, which is further integrated into Eqn.~(5). Consequently, we can approximate Eqn.~(5) by the following convex quadratic program:

\begin{align}
\label{eqn-robust-qp}
&\min\  \frac{1}{2}||\mathbf{w}||^2 + C(N - \sum_{i=1}^N \sum_{k=1}^K \mathbf{w}_n^k \mathbf{x}_i \hat{y}_i^k  + \sum_{i=1}^N t_i + \sum_{(i,j)\in E} p_{ij} - D^- \cdot t_D) \nonumber \\
&\text{s.t.}\quad \forall i,k,\quad  t_i -\sum_{(i,j),(j,i) \in E} t_{ij}^k - \mathbf{w}_n^k\mathbf{x}_i + \hat{y}_i^k \geq 0, \nonumber \\
&\forall (i,j)\in E,k, \quad s_{ij}^k + t_{ij}^k + t_{ji}^k - w_e^k \geq 0,\  s_{ij}^k, t_{ij}^k,t_{ji}^k \geq 0, \nonumber\\
&\forall (i,j) \in E,\quad  p_{ij} - \sum_{k=1}^K s_{ij}^k - t_D + \sum_{(i,j)\in E} \sum_{k=1}^K w_e^k \hat{y}_i^k \hat{y}_j^k \geq 0,
\ p_{ij}, t_D \geq 0.
\end{align}
The minimization is over the weights $\mathbf{w}$ and the dual variables $t_i, p_{ij}, s_{ij}^k, t_{ij}^k, t_{ji}^k, t_D$.

\section{Additional Experiment Results}
We compare R-AMN and GCN under the deep-attack as well as on non-adversarial data on the Cora and CiteSeer datasets in the same experiment settings as in the main paper. Specifically, in Fig.~\ref{fig-deep-results}, "R-AMN/deep-attack" shows the accuracies of R-AMN under deep-attack with various degrees of graph perturbations, where the train graph and test graph are attacked by deep-attack separately. It demonstrates that R-AMN is robust to deep-attack even with relatively large structural perturbations. "GCN/deep-attack" and "GCN/Struct-AD" show the accuracies of GCN under deep-attack and our proposed \textit{Struct-AD} attack, respectively. Generally, deep-attack is a much more effective method to attack GCN models.
% However, when the graph perturbation is small, deep-attack actually improves the performance of GCN (this is observed only on Cora dataset, which has a different graph structure than the Reuters dataset). 
Fig.~\ref{fig-deep-compare} demonstrated that on non-adversarial data, the performances of R-AMN and GCN are comparable.

\begin{figure}[!hb]
	\begin{subfigure}[b]{0.24\textwidth}
		\centering
		\includegraphics[width=\textwidth, height=2cm]{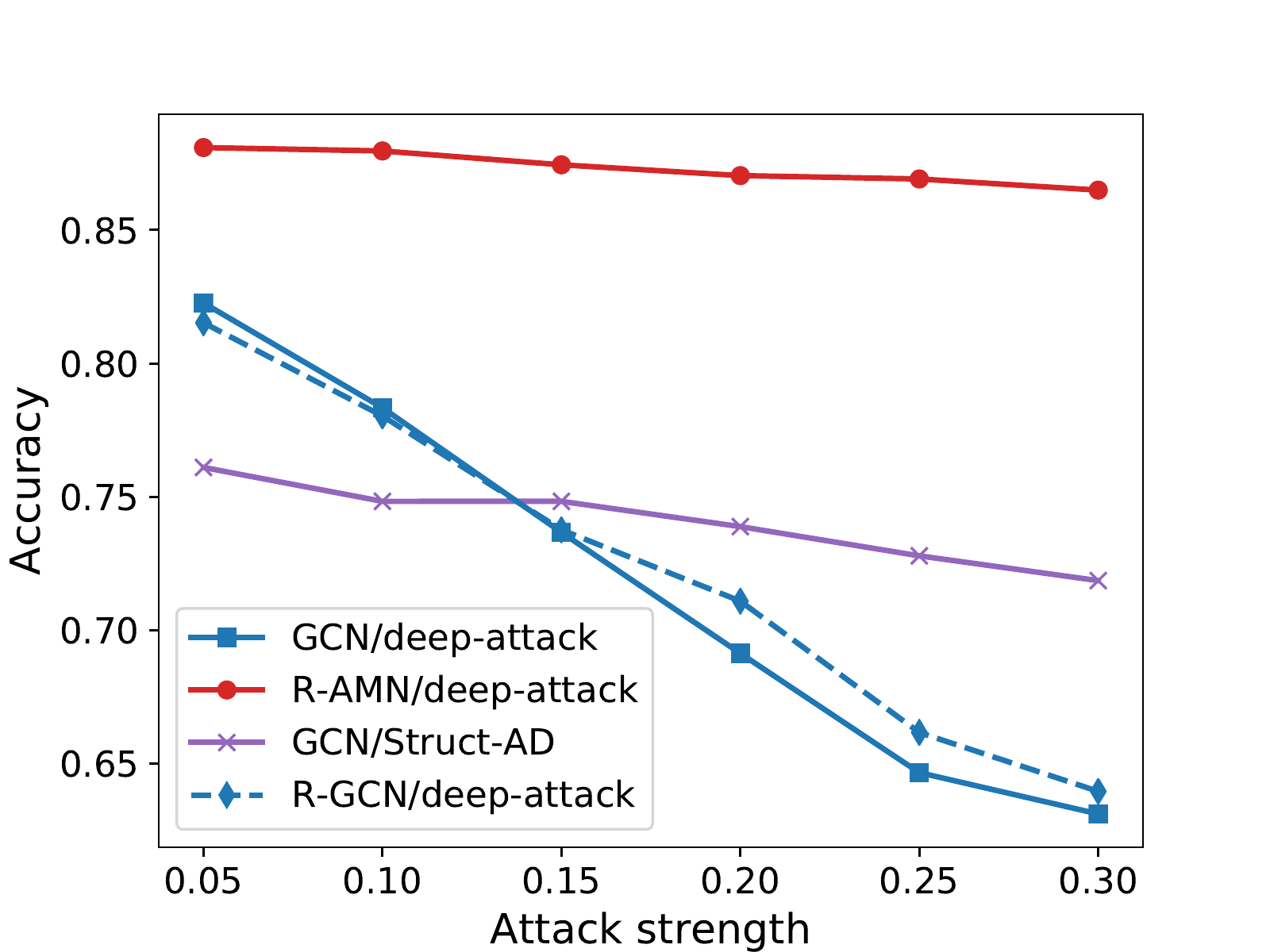}
		\caption{Cora-H}
		\label{fig-deep-attack-Cora-H}
	\end{subfigure}	
	\hfill
	\begin{subfigure}[b]{0.24\textwidth}
		\centering
		\includegraphics[width=\textwidth, height=2cm]{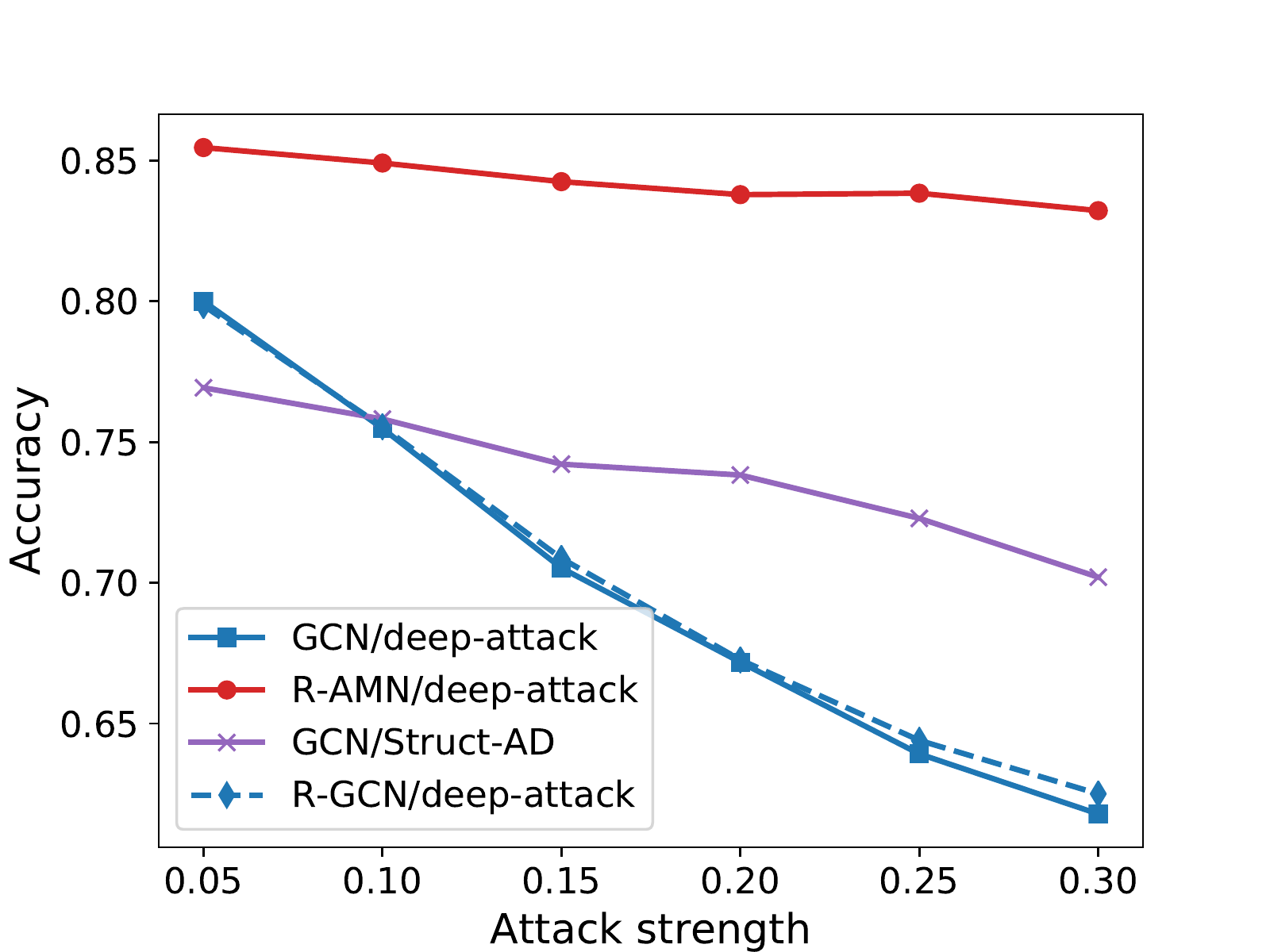}
		\caption{Cora-L}
		\label{fig-deep-attack-Cora-L}
	\end{subfigure}
	\hfill
	\begin{subfigure}[b]{0.24\textwidth}
		\centering
		\includegraphics[width=\textwidth, height=2cm]{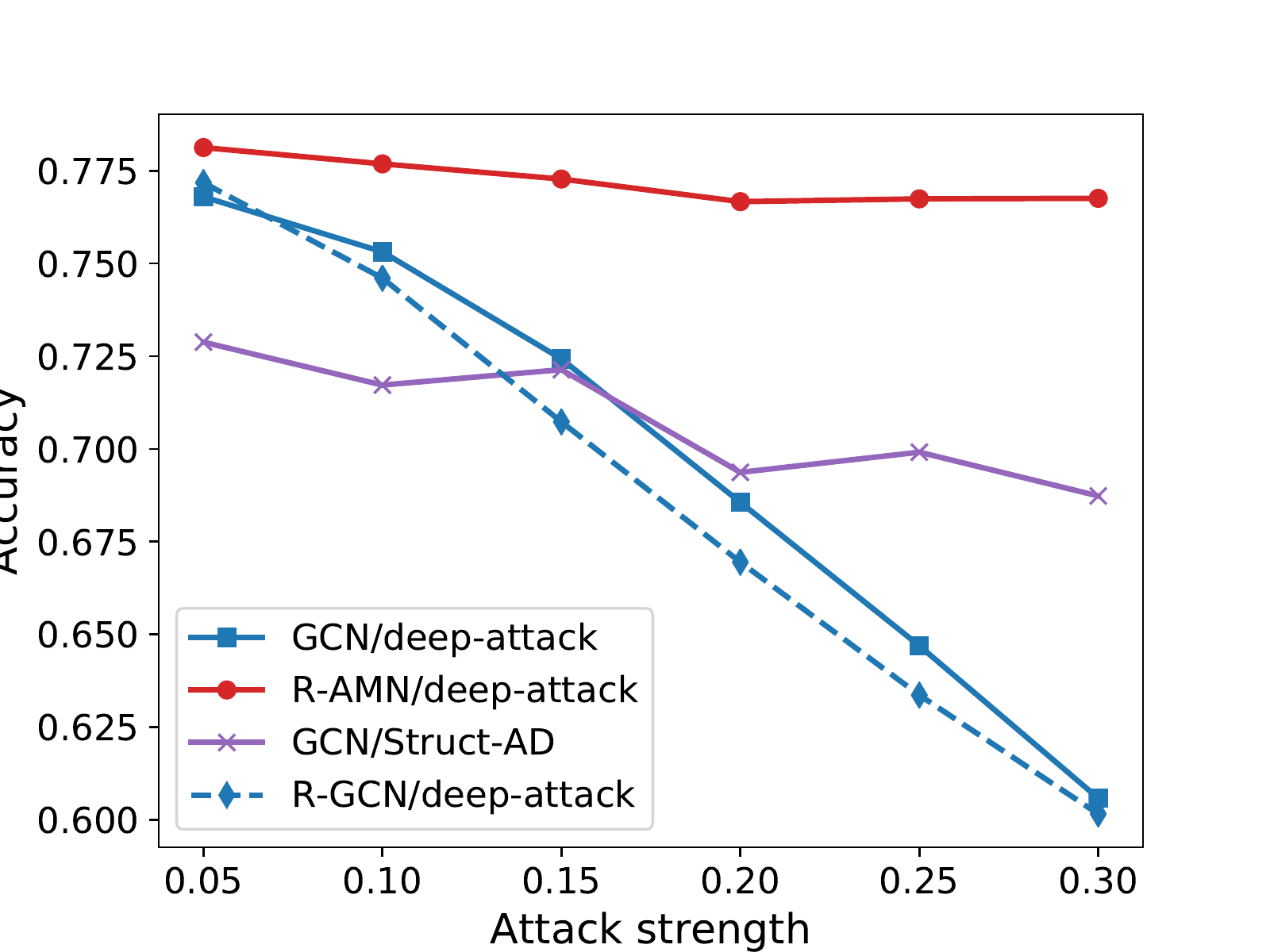}
		\caption{CiteSeer-H}
		\label{fig-deep-attack-Cite-H}
	\end{subfigure}	
	\hfill
	\begin{subfigure}[b]{0.24\textwidth}
		\centering
		\includegraphics[width=\textwidth, height=2cm]{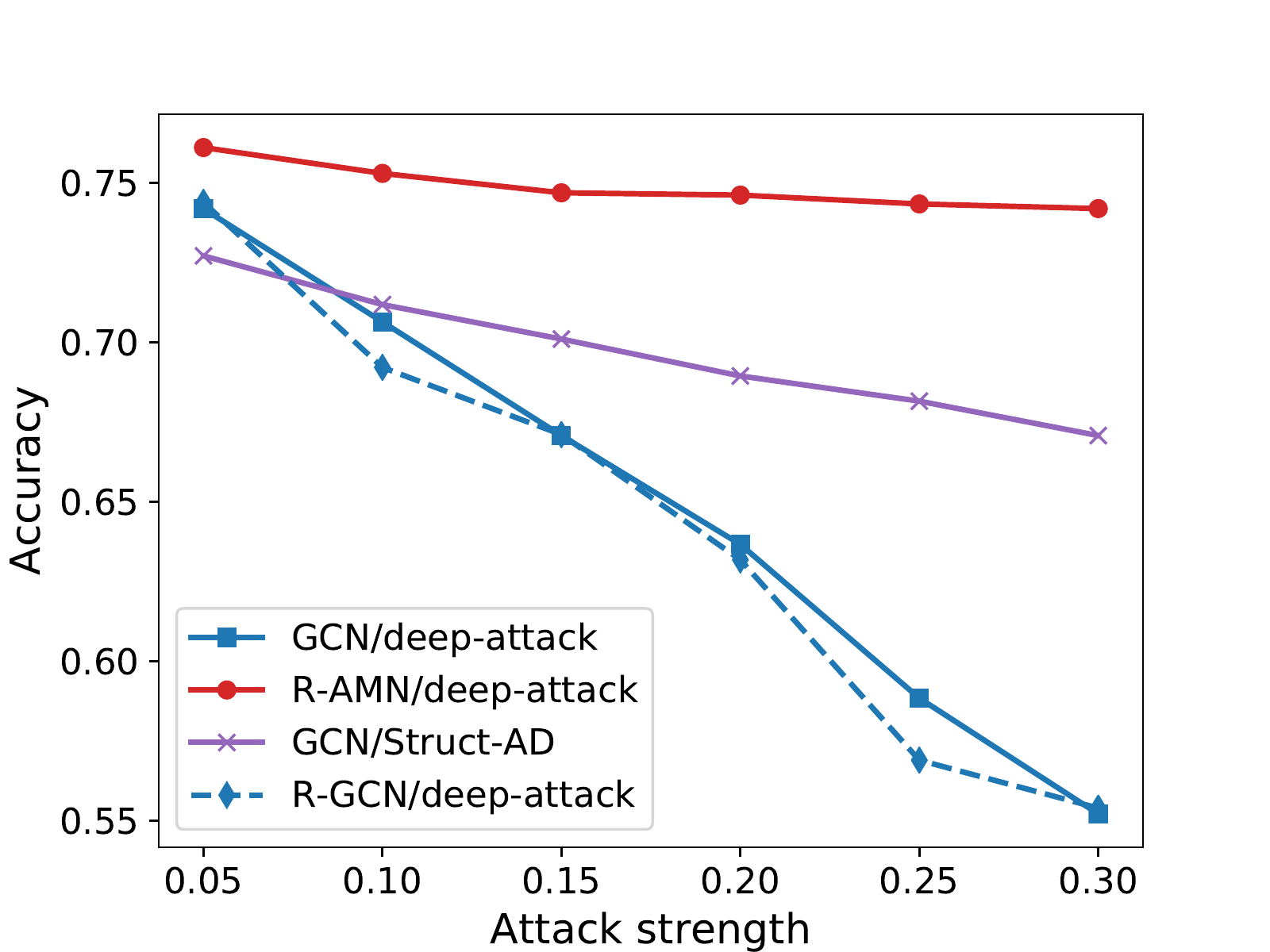}
		\caption{CiteSeer-L}
		\label{fig-deep-attack-Cite-L}
	\end{subfigure}
	\caption{R-AMN and R-GCN under deep attack; GCN under deep attack and Struct-AD attack.} 
	\label{fig-deep-results}
\end{figure}

\begin{figure}[!hb]
	\begin{subfigure}[b]{0.24\textwidth}
		\centering
		\includegraphics[width=\textwidth, height=2cm]{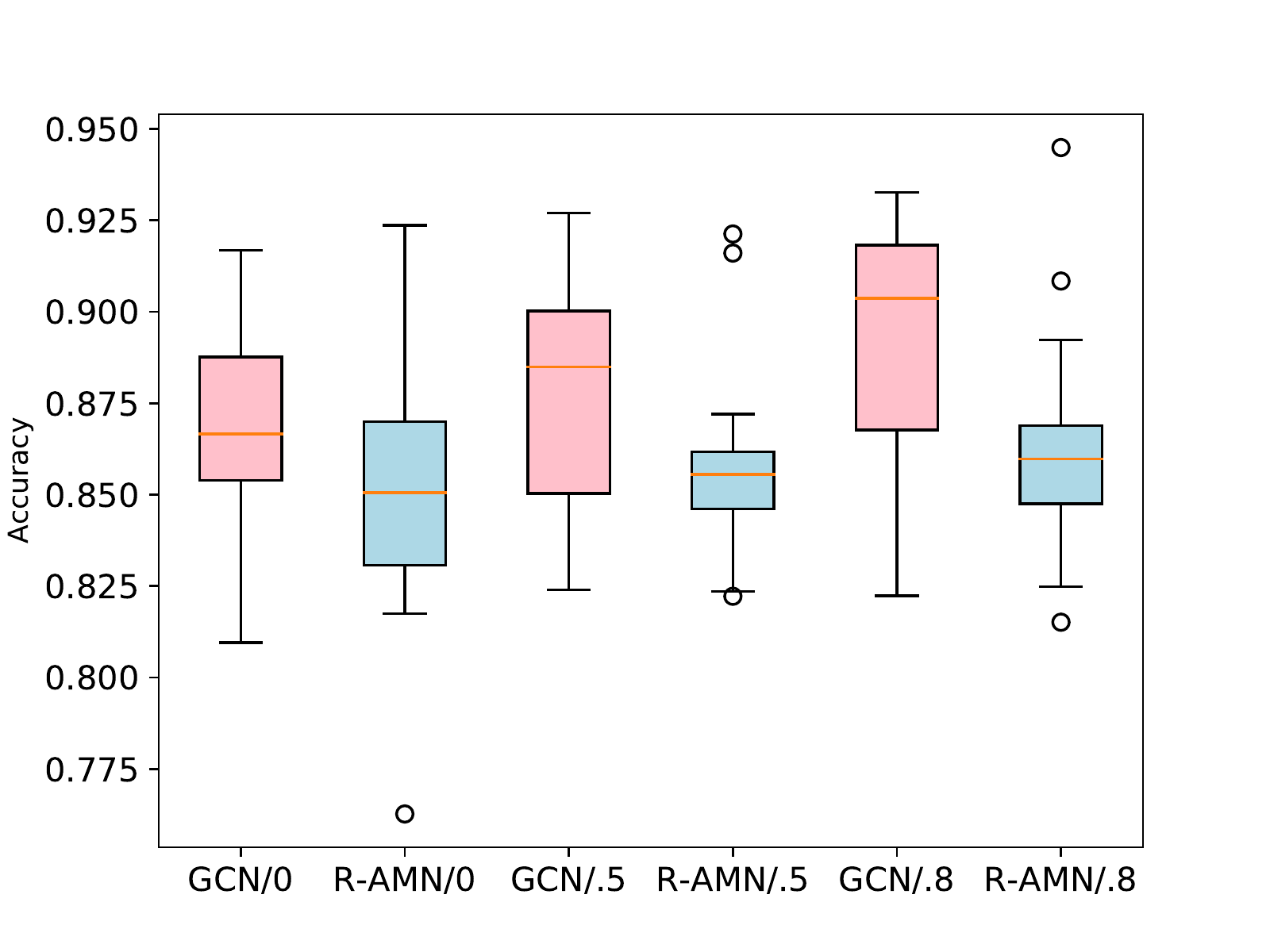}
		\caption{Cora-H}
		\label{fig-Cora-H-compare}
	\end{subfigure}
	\hfill
	\begin{subfigure}[b]{0.24\textwidth}
		\centering
		\includegraphics[width=\textwidth, height=2cm]{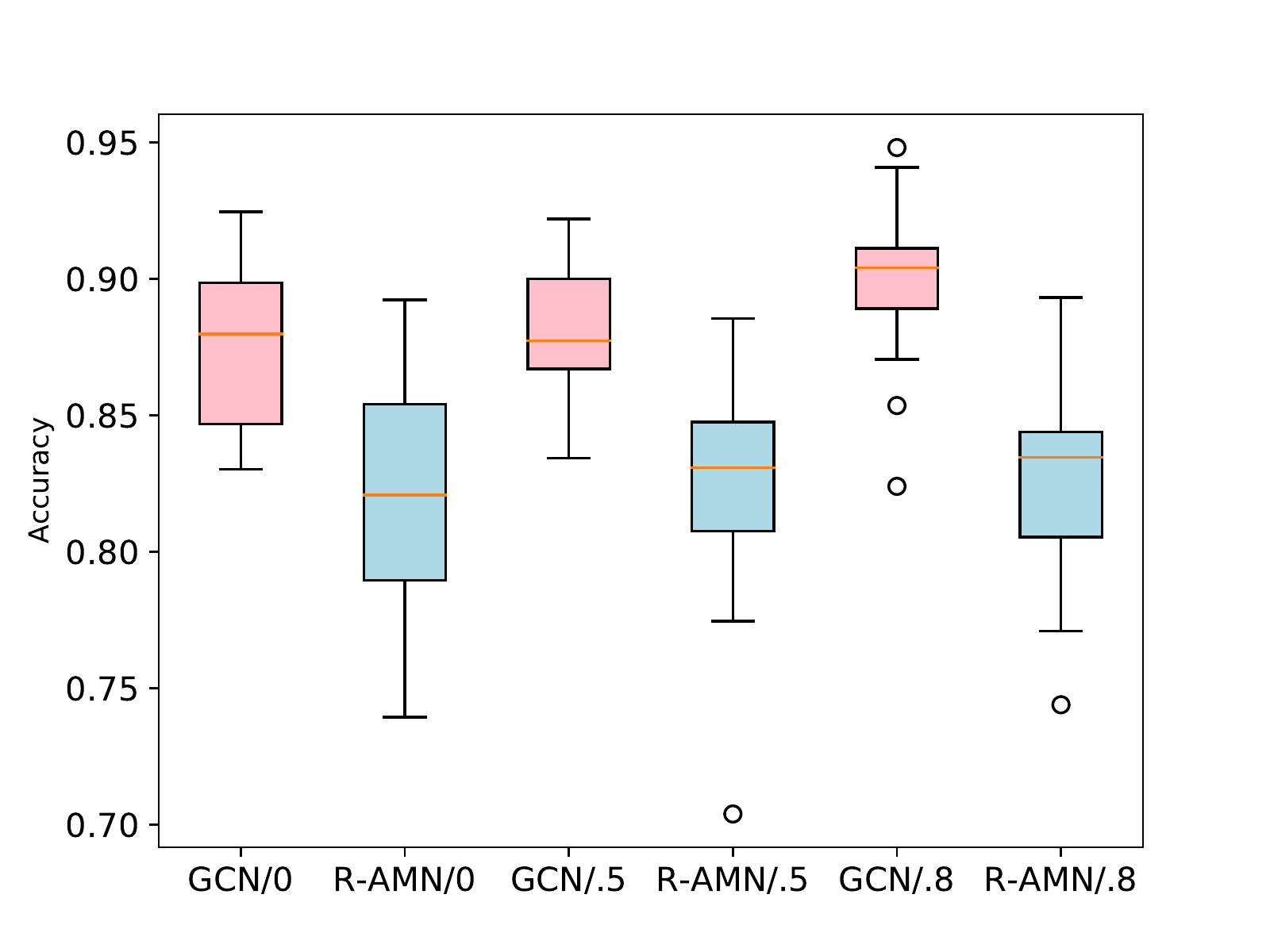}
		\caption{Cora-L}
		\label{fig-Cora-L-compare}
	\end{subfigure}	
	\hfill
	\begin{subfigure}[b]{0.24\textwidth}
		\centering
		\includegraphics[width=\textwidth, height=2cm]{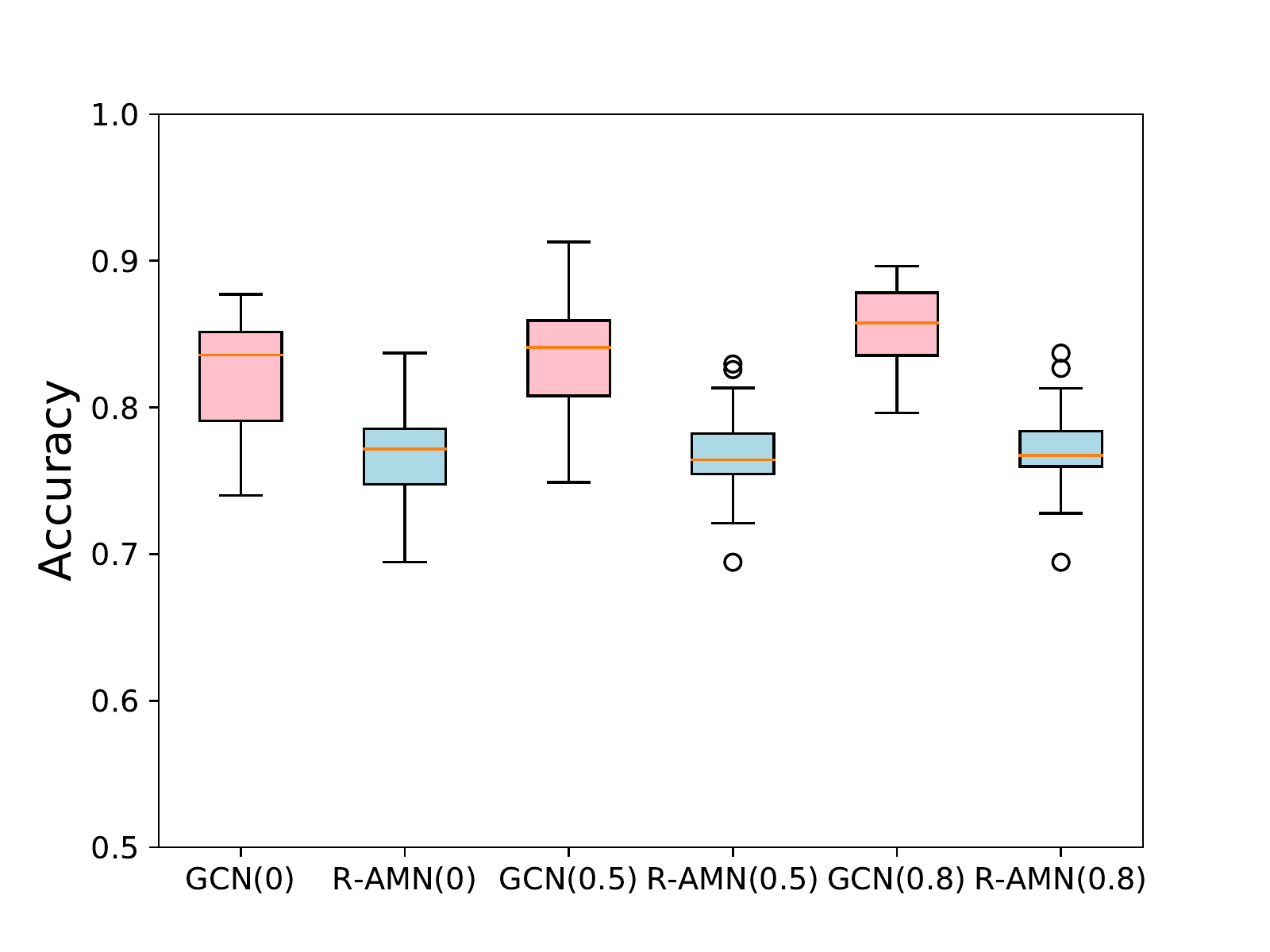}
		\caption{CiteSeer-H}
		\label{fig-CiteSeer-H-compare}
	\end{subfigure}
	\hfill
	\begin{subfigure}[b]{0.24\textwidth}
		\centering
		\includegraphics[width=\textwidth, height=2cm]{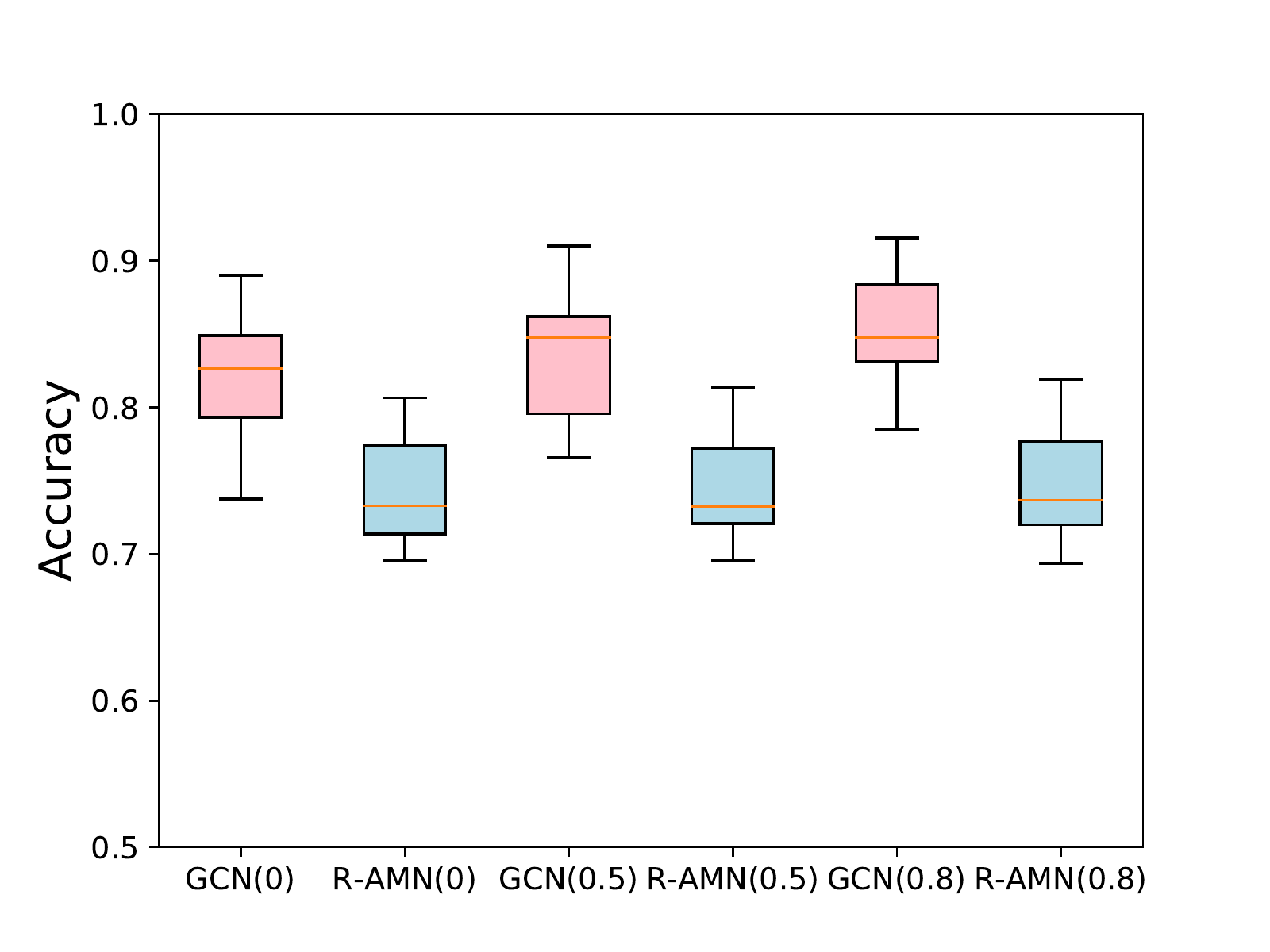}
		\caption{CiteSeer-L}
		\label{fig-CiteSeer-L-compare}
	\end{subfigure}	
	\caption{R-AMN and GCN on non-adversary data as graphs
		are purified, e.g. R-AMN(0.5) stands for R-AMN when
		noisy edges are deleted with probability 0.5.} 
	\label{fig-deep-compare}
\end{figure}
\end{document}